\newtheorem{theorem}{Theorem}
\theoremstyle{definition}
\newtheorem{definition}{Definition}[section]
\newtheorem{lemma}[theorem]{Lemma}
\begin{document}

%
\title{
InFi: End-to-End Learning to Filter Input for Resource-Efficiency in Mobile-Centric Inference
~\thanks{This article is a substantially extended and revised version of Yuan et al.~\cite{infi}, which appeared in the proceedings of the 28th Annual International Conference on Mobile Computing And Networking (ACM MobiCom '22).}
}

%
%
%
%

\author{Mu~Yuan,~
        Lan~Zhang,~\IEEEmembership{Member,~IEEE,}
        Fengxiang~He,~\IEEEmembership{Member,~IEEE,}
        Xueting~Tong,~
        Miao-Hui~Song,~
        Zhengyuan~Xu,~\IEEEmembership{Senior Member,~IEEE,}
        and~Xiang-Yang~Li,~\IEEEmembership{Fellow,~IEEE}
\IEEEcompsocitemizethanks{
\IEEEcompsocthanksitem Lan Zhang is the corresponding author.
\IEEEcompsocthanksitem Lan Zhang is with the School of Computer Science and Technology and School of Data Science, University of Science and Technology of China, Hefei, China, and Institute of Artificial Intelligence, Hefei Comprehensive National Science Center, Hefei, China.\protect\\
E-mail: zhanglan@ustc.edu.cn
\IEEEcompsocthanksitem Mu Yuan, Miao-Hui Song and Xiang-Yang Li are
with the School of Computer Science and Technology, University of Science and Technology of China, Hefei, China.\protect\\
E-mail: ym0813@mail.ustc.edu.cn, songmiaohui@mail.ustc.edu.cn, xiangyangli@ustc.edu.cn
\IEEEcompsocthanksitem Xueting Tong is with the Institute of Advanced Technology, University of Science and Technology of China, Hefei, China.\protect\\
E-mail: tongxueting@mail.ustc.edu.cn
\IEEEcompsocthanksitem Fengxiang He is with JD Explore Academy, JD.com Inc., Beijing, China.\protect\\
E-mail: fengxiang.f.he@gmail.com
\IEEEcompsocthanksitem Zhengyuan Xu is with the Key Laboratory of Wireless Optical Communications, Chinese Academy of Sciences, University of
Science and Technology of China, Hefei, China.\protect\\
E-mail: xuzy@ustc.edu.cn
}
}

\IEEEtitleabstractindextext{%
\begin{abstract}
Mobile-centric AI applications have high requirements for the resource-efficiency of model inference.
Input filtering is a promising approach to eliminate redundancy so as to reduce the cost of inference.
Previous efforts have tailored effective solutions for many applications, but left two essential questions unanswered: 
(1) \emph{theoretical filterability of an inference workload} to guide the application of input filtering techniques, thereby avoiding the trial-and-error cost for resource-constrained mobile applications; 
(2) \emph{robust discriminability of feature embedding} to allow input filtering to be widely effective for diverse inference tasks and input content.
To answer them, we first formulate the input filtering problem and theoretically compare the hypothesis complexity of inference models and input filters to understand the optimization potential.
Then we propose the first end-to-end learnable input filtering framework that covers most state-of-the-art methods and surpasses them in feature embedding with robust discriminability.
We design and implement \textit{InFi} that supports different input modalities and mobile-centric deployments.
Comprehensive evaluations confirm our theoretical results and show that \textit{InFi} outperforms strong baselines in applicability, accuracy, and efficiency.
\textit{InFi} can achieve 8.5$\times$ throughput and save 95\% bandwidth, while keeping over 90\% accuracy, for a video analytics application on mobile platforms.
\end{abstract}

\begin{IEEEkeywords}
Input Filtering, Model Inference, Mobile Computing, Multimodal Data
\end{IEEEkeywords}}

\maketitle

\IEEEdisplaynontitleabstractindextext

%
\IEEEpeerreviewmaketitle

\IEEEraisesectionheading{\section{Introduction}\label{sec:intro}
}

\IEEEPARstart{T}{he} increased computing power of mobile devices and the growing demand for real-time sensor data analytics have created a trend of mobile-centric artificial intelligence (AI)~\cite{lasagna, on-device-inference-survey, edge-ai, in-edge-ai}.
It is estimated that over 80\% of  enterprise IoT projects will incorporate AI by 2022.
The on-device inference of computer vision models brings us increasingly rich real-time AR applications on mobile devices~\cite{mobile-ar}.
A judicious combination of on-device and edge computing can analyze videos taken by drones in real-time~\cite{drone-video}.
The resource efficiency of model inference is critical for AI applications, especially for resource-limited mobile devices and latency-sensitive tasks.
However, many AI models with state-of-the-art accuracy~\cite{sota-seg, sota-pose, sota-nlp} are too computationally intensive to perform high-throughput inference, even when they are offloaded to edge or cloud servers~\cite{elf}.

For resource-efficient inference, one direct and popular way is to eliminate the redundancy of the deep model itself via accelerating and compressing techniques ~\cite{yolov3-tiny, light-openpose, mobile-bert, mobilenet-v2, mnasnet, mcdnn, automc}.
In this work, we follow another series of approaches~\cite{ff, reducto, foggycache, potluck, glimpse, noscope, infi} that attempt to filter the redundancy in the input data.
Fig.~\ref{fig:intro} shows four examples of input redundancy in mobile-centric AI applications.
We call this series of approaches input filtering and classify them into two categories: SKIP and REUSE.
(1) \textbf{SKIP} methods~\cite{ff, noscope} aim to filter input data that will bring useless inference results, e.g., images without faces for a face detector (Fig.~\ref{fig:intro1}) and audios without a valid command for a speech recognizer (Fig.~\ref{fig:intro2}). FilterForward~\cite{ff} trains a binary classifier and sets a threshold on classification confidence to filter input images.
(2) \textbf{REUSE} methods~\cite{foggycache, potluck} attempt to filter input whose results can reuse the previous inference results, e.g., motion signals of the same action (Fig.~\ref{fig:intro3}) and video frames with the same vehicle count (Fig.~\ref{fig:intro4}). 
FoggyCache~\cite{foggycache} maintains a cache of feature embedding and inference results of previous inputs and searches reusable results in the cache for newly arrived data.
Input filtering usually works as a necessary prelude to inference for under-resourced mobile systems.  
Moreover, compared with model optimizations, input filtering provides more flexible trade-offs between accuracy and efficiency, e.g., FilterForward can adjust the threshold in SKIP and FoggyCache can adjust the cache size in REUSE.
Although prior efforts have designed effective input filters for a range of applications, two important and challenging questions remain unanswered:

\begin{figure}[t]
     \centering
     \begin{subfigure}[b]{0.49\linewidth}
         \centering
         \includegraphics[width=\linewidth]{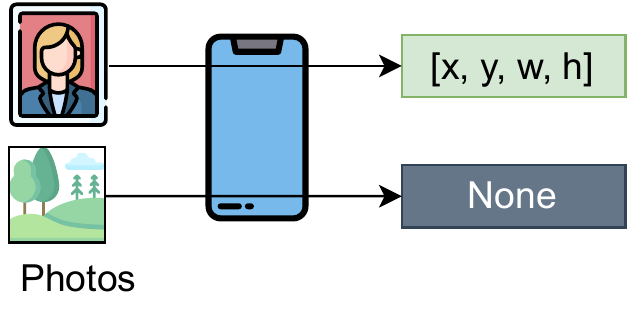}
         \caption{Face detection on mobile phones.}
         \label{fig:intro1}
     \end{subfigure}
     \hfill
     \begin{subfigure}[b]{0.49\linewidth}
         \centering
         \includegraphics[width=\linewidth]{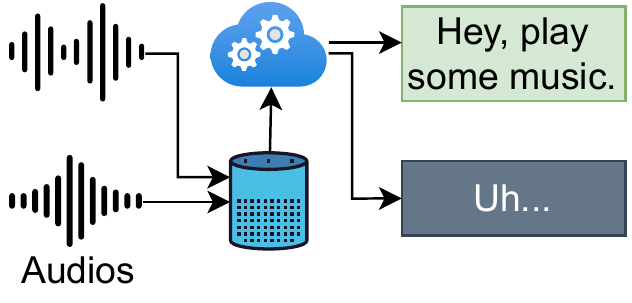}
         \caption{Speech recognition offloaded on the cloud.}
         \label{fig:intro2}
     \end{subfigure}
     \begin{subfigure}[b]{0.49\linewidth}
         \centering
         \includegraphics[width=\linewidth]{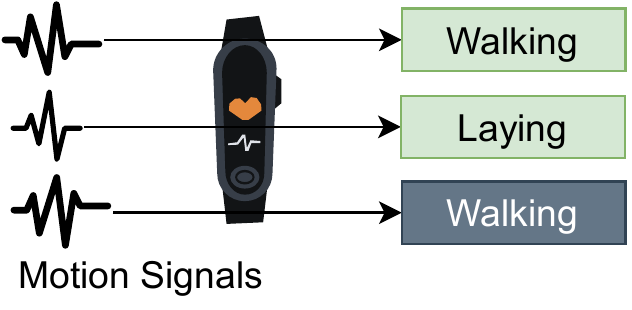}
         \caption{Human action recognition on smartbands.}
         \label{fig:intro3}
     \end{subfigure}
     \hfill
     \begin{subfigure}[b]{0.49\linewidth}
         \centering
         \includegraphics[width=\linewidth]{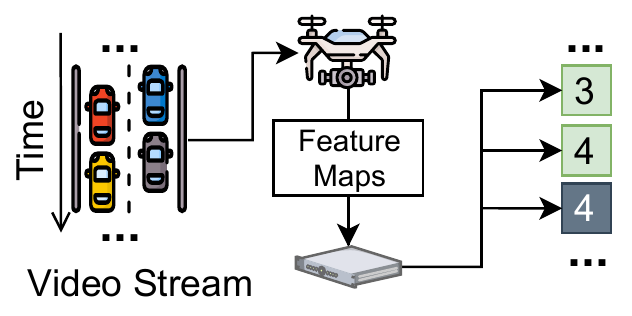}
         \caption{Vehicle counting using drones and edges.}
         \label{fig:intro4}
     \end{subfigure}
\caption{Input redundancy in mobile-centric AI applications. Gray squares indicate redundant inference results: (a) no detected face, (b) invalid recognized speech, (c) previous classification result can be reused, (d) latest count result can be reused.}
\label{fig:intro}
\end{figure}

\textbf{1. Theoretical filterability analysis for the guidance of applying input filtering to mobile-centric inference:} 
Not all inference workloads have the optimization potential by using input filtering.
Sometimes, to achieve the required accuracy, a SKIP/REUSE filter is more costly than the original inference.
Characterizing the conditions under which the filter has to cost more to be accurate is thus essential to input filtering.
Previous efforts study the input filtering problem from an application-oriented perspective.
They start from the observation of redundancy and propose bespoke input filtering solutions without further analyzing the relation between their inference workloads and input filters.
Without theoretical guidance and explanation, though they delivered accurate and lightweight input filters for specific workloads, the trial-and-error process of designing input filters for other workloads is still very cumbersome and may fail next time, especially for resource-scarce mobile systems.

\textbf{2. Robust feature discriminability for diverse tasks and modalities in mobile-centric inference:} 
A discriminative feature representation~\cite{dis-feat} is critical to filtering performance since it directly determines the accuracy of making SKIP decisions and finding REUSABLE results.
Recent work~\cite{reducto} shows that for different workloads, the discriminability of low-level features is different, e.g., the area feature works better for counting while the edge feature works better for detection.
Most existing filtering methods leverage handcrafted features~\cite{foggycache, potluck, reducto} or pre-trained neural networks as feature embedding ~\cite{ff}, and implicitly assume that these features are sufficiently discriminative for the target workloads.
However, mobile applications usually have high diversity in input content and inference tasks.
The dependency on pre-trained or handcrafted features leads to unguaranteed discriminability to these diversities.
Our experiments ($\S$~\ref{subsec:acc-r}) show that, for an action classification workload, neither a SKIP method using the pre-trained feature~\cite{ff} nor a REUSE method using the handcrafted feature~\cite{foggycache} can work effectively.
The feature embedding should be obtained in a workload-agnostic and learnable manner, rather than tailored case by case.

To answer these questions, we first provide a generic formulation of the input filtering problem and conditions of valid filters.
Then we theoretically define filterability and analyze the filterability of the two most common types of inference workloads (namely, classification and regression) by comparing the hypothesis complexity~\cite{foundationML, colt} of the inference model and its input filter.
Instead of designing bespoke solutions for narrowly-defined tasks, we propose the first end-to-end learnable framework which unifies both SKIP and REUSE approaches~\cite{ff, reducto, foggycache}.
The end-to-end learnability provides feature embedding with robust discriminability in a workload-agnostic manner, thus significantly broadening the applicability.
Based on the unified framework, we design an input filtering system, named \textit{InFi}, which supports both SKIP and REUSE functions.
In addition to image, audio, and video inputs, \textit{InFi} complements existing techniques in supporting text, sensor signal, and feature map inputs.
Previous methods are typically designed for a certain deployment, e.g., inference offloading~\cite{reducto, foggycache}.
\textit{InFi} flexibly supports common deployments in mobile systems, including on-device inference, offloading, and model partitioning~\cite{model-partition}.
In summary, our main contributions are as follows:

\noindent \textbullet We formulate the input filtering problem and provide validity conditions of a filter.
    We present the analysis based on complexity comparisons between hypothesis families of inference workloads and input filters, which can guide and explain the application of input filtering techniques.

\noindent \textbullet We propose the first end-to-end learnable input filtering framework that unifies SKIP and REUSE methods.
    Our framework covers most existing methods and surpasses them in feature embedding with robust discriminability, thus supporting more input modalities and inference tasks.

\noindent \textbullet We design and implement an input filtering system \textit{InFi}.
    Comprehensive evaluations on workloads with 8 input modalities, 14 inference tasks, and 3 types of mobile-centric deployments show that \textit{InFi} has wider applicability and outperforms strong baselines in accuracy and efficiency.
    For a video analytics application on a mobile platform, \textit{InFi} can achieve up to 8.5$\times$ throughput and save 95\% bandwidth compared with the naive vehicle counting workload, while keeping over 90\% accuracy.
\section{Input Filtering}
\label{sec:input-filtering}

This section formulates the input filtering problem and provides the conditions of a ``valid'' input filter for resource-efficient mobile-centric inference.

\subsection{Problem Definition}
\label{subsec:definition}

An input filtering problem needs to determine what input is redundant and should be filtered for a given inference model.
First, the definition of an input filtering problem is based on its target inference model.
Let $\mathcal{X}, \mathcal{Y}$ denote the input space and the label space of the target model, respectively.
Define $c: \mathcal{X}\rightarrow \mathcal{Y}$, named the target concept~\cite{pac}, which provides the ground-truth label for each input.
Then training a target model is to search for a function $h$ from a hypothesis family~\cite{pac} $\mathcal{H}$ using a set of training samples $S=\{(x_i, y_i)\}_{i=1}^m$, where $(x_1, ..., x_m)$ are sampled independently from $\mathcal{X}$ with an identical distribution $D$ and $y_i = c(x_i)$.
Using the above notations, we define the learning problem of the target inference model $h$ by $(\mathcal{X}, \mathcal{Y}, c, \mathcal{H}, D, S)$.
Step 0 in Fig.~\ref{fig:filtering} shows the original inference workflow of a trained model $h$, which takes input from $\mathcal{X}$ and returns an inference result $y \in \mathcal{Y}$.

Next, given a trained inference model $h$, its redundancy measurement function can be defined as:

\begin{definition}[Redundancy Measurement]
A redundancy measurement $f_h: \mathcal{Y} \rightarrow \mathcal{Z}$ of a model $h$ is a function that takes \textbf{only} the output of $h$ as input and returns a score that indicates whether the inference computation is redundant.
\end{definition}

Such measurements are common in practice.
For example, based on the output of a face detector the inference computation that returns no detected face is redundant and can be skipped, and we can set the score $z=0$;
Otherwise, $z=1$.
Formally, $y \mapsto 1(|y|>0)$, where $y$ is the output set of detected faces, $1(\cdot)$ is the indicator function.
For REUSE cases, if the inference result of an action classifier on a new query is the same as previously cached, the computation is redundant, and we can define $f_h(y) = 1(y \notin Y_{cached})$.
Note that, this definition of redundancy measurement does not depend on ground-truth labels, since our focus is not the accuracy but to optimize the resource efficiency of a deployment-ready target model with trusted accuracy by eliminating its redundant inference.
Step 1 in Fig.~\ref{fig:filtering} shows how redundancy measurement works.

Given the inference workload $h$ and redundancy measurement $f_h$, as Step 2 in Fig.~\ref{fig:filtering}, learning an input filter is defined as searching for a function $g$ from a hypothesis family $\mathcal{G}$ using a set of training samples $S'=\{(x_i, z_i)\}_{i=1}^n$, where $(x_1, ..., x_n)$ are sampled independently with distribution $D'$ and $z_i = f_h(h(x_i))$.
This learning problem is denoted by $(\mathcal{X}, \mathcal{Z}, f_h \circ h, \mathcal{G}, D', S')$, i.e., $g$'s target concept is the composite function of $f_h$ and $h$.

\begin{figure}[t]
    \centering
    \includegraphics[width=0.98\linewidth]{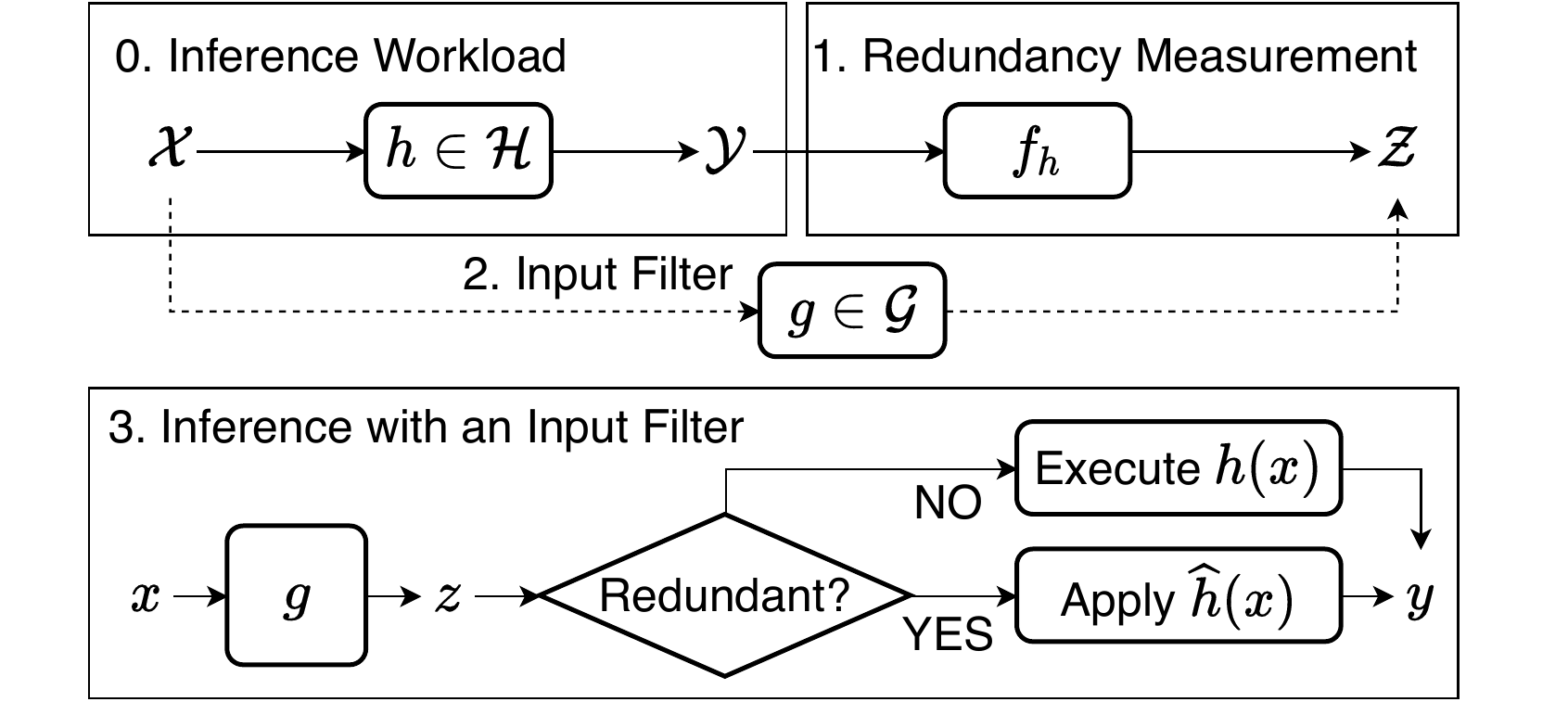}
    \caption{Overview of input filtering for inference workload.}
    \label{fig:filtering}
\end{figure}

\textbf{Inference with an input filter.}
Once an input filter $g$ is trained, the inference workflow changes from Step 0 to Step 3 in Fig.~\ref{fig:filtering}. The input filter $g$ becomes the entrance of the workload, which predicts the redundancy score $z$ of each input $x$.
If not redundant, the inference model $h$ will be directly executed on the input.

\subsection{Validity Conditions}
\label{subsec:validity}

After defining an input filter, we now give the conditions that a ``\textit{valid}'' input filter needs to meet for resource-efficient mobile inference.
The input filter is designed to balance the resource and accuracy: filtering more inputs can save more resources, but it also brings a higher risk of incorrect inference results. 

\textbf{Inference accuracy.}
With an input filter, the inference result $y$ for input $x$ is returned either by executing $h(x)$ or applying $\hat{h}(x)$.
Following previous work~\cite{foggycache,reducto,ff}, the correctness of the result $y$ refers to its consistency with the exact inference result by $h(x)$, rather than the ground-truth label.
An input filter's inference accuracy $Acc$ is defined as the ratio of correct results obtained by the inference workload with the filter.

\textbf{Filtering rate.}
The filtering rate, denoted by $r$, is defined as the ratio of filtered inputs (i.e., the ratio of results obtained by applying $\hat{h}$), which is also an important performance metric considered in previous work~\cite{ff,reducto,foggycache}.

\textbf{Overall cost.}
The overhead of an inference workload with an input filter needs to take $g$, $h$ and $\hat{h}$ into consideration.
Let $C(\cdot)$ denote the cost of a certain function.
For the cost of computation (e.g., runtime), the average cost per input changes from $C(h)$ into $C(g)+(1-r)C(h)+rC(\hat{h})$.
The communication cost (e.g., bandwidth) depends on the deployment of the mobile-centric inference workload.
On-device inference does not involve communication, while the overall bandwidth cost of offloading~\cite{ff,reducto} and model partitioning~\cite{model-partition} deployments becomes the original cost multiplied by $(1-r)<1$.

Based on the above metrics, we define an input filter as ``\textit{valid}'' if it satisfies two conditions:
1) \textbf{Accurate enough}: $Acc>T_{Acc}$, where $T_{Acc}$ is the threshold of acceptable inference accuracy.
2) \textbf{Reduced overhead}: the overall cost with an input filter is lower.
If we aim to reduce the computation cost, we need $(C(g)+(1-r)C(h)+rC(\hat{h}))/C(h) < 1$, i.e., $r>C(g)/(C(h)-C(\hat{h}))$;
If we aim to reduce the communication cost, we only need $r>0$.

\section{Filterability Analysis}
\label{sec:filterability}

As mentioned in Sec.~\ref{sec:intro}, not all inference workloads have the optimization potential by using input filtering techniques. 
Given an inference workload in a mobile-centric AI application, is there a valid input filter? 
To answer this question, based on our formulation of the input filtering problem, we first define the \textit{filterability} of an inference workload. 
Then we analyze filterability in three typical inference cases in SKIP settings and discuss uncovered cases.

\subsection{Definition of Filterability}
Given the learning problem $(\mathcal{X}, \mathcal{Y}, c, \mathcal{H}, D, S)$ of an inference model and the learning problem $(\mathcal{X}, \mathcal{Z}, f_h \circ h, \mathcal{G}, D', S')$ of its input filter, to simplify the analysis, we make assumptions as follows:
(1) $D=D'$, i.e., the training samples follow the identical distribution;
(2) $S'=\{(x_i, z_i)\}_{x_i \in S}$, i.e., the two learning problems share the same inputs in their training samples. 
But they are supervised under different labels.
The inference model $h$ is supervised by $y_i = c(x_i)$, while the input filter $g$ is supervised by $z_i = (f_h \circ h)(x_i)$.
Our intuitive idea for filterability is that, if an inference workload is filterable, the learning problem of its input filter should have lower complexity than the learning problem of its inference model.
Formally, we define \textit{filterability} as follows:
\begin{definition}[Filterability]
Let $Complex(\cdot)$ denote the complexity measurement of a hypothesis family.
We say that the inference workload is filterable, if $Complex(\mathcal{G})$ $\leq Complex(\mathcal{H})$, where $h\in \mathcal{H}$ and $(f_h \circ h) \in \mathcal{G}$.
\end{definition}
Since the hypothesis family cannot be determined based only on input and output spaces, we use the family of the input filter's target concept $f_h \circ h$ as $\mathcal{G}$.

Now we can characterize the theoretically achievable accuracy and overhead of the input filter for a given inference model by leveraging computational learning theory~\cite{foundationML}. 
It has been proven that the more complex the hypothesis family is, the worse the bounds of generalization error.
On the other hand, the hypothesis complexity of neural networks has a positive correlation with the number of parameters.
For example, let $W, L$ denote the number of weights and the number of layers in a deep neural network.
The VC-dimension~\cite{vcdim} (a measurement of the hypothesis complexity) is $O(WL\log(W)$~\cite{vcd-linear-nn}. 
In the case of the same layer structure, the more parameters the higher the inference overhead of neural networks. 
The generalization error bound and the number of parameters correspond to the accuracy and efficiency metrics in validity conditions ($\S$~\ref{subsec:validity}), respectively, although they are not strict quantification.
Therefore, if an inference workload is filterable, whose input filter has lower hypothesis complexity, we are confident to obtain a valid filter with sufficiently high accuracy and lower overhead than the inference model. 
Next, we will analyze the complexities of the hypothesis family of inference workload $h$ and its input filter $g$ in different cases.

\subsection{Low-Confidence Classification as Redundancy}
\label{subsec:conf}

Considering an inference workload, where the inference model is a binary classifier $h$ that returns the classification confidence, and the redundancy measurement regards the classification result with a confidence lower than a threshold $t$ as redundant, i.e., $f_h(y)=\text{sign}(y > t)$.
Confidence-based classification is very common in mobile AI applications, such as speaker verification.
We adopt the empirical Rademacher complexity~\cite{colt}, denoted by $\widehat{\mathfrak{R}}_S(\cdot)$, as the complexity measurement, which derives the following generalization bounds~\cite{foundationML}:
\begin{theorem}[Rademacher complexity bounds]
Let $\mathcal{H}$ be a family of hypothesis taking values in $\{-1,+1\}$.
Then for any $\delta > 0$, with probability at least $1-\delta$, the following holds for all $h\in \mathcal{H}$:
\begin{equation}
    R(h) \leq \widehat{R}(h) + \widehat{\mathfrak{R}}_S\mathcal{H}) + 3\sqrt{\frac{\log(2/\delta)}{2m}},
\end{equation}
where $R(h)$ and $\widehat{R}(h)$ denote the empirical and generalization errors, and $m$ is the number of training samples.
\end{theorem}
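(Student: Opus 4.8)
The plan is to treat this as the classical Rademacher-based generalization bound for binary classification and to reduce the statement about the $0/1$ error to the generic concentration bound for an associated loss-function class, whose empirical Rademacher complexity can then be tied back to that of $\mathcal{H}$ itself.

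First I would fix notation, writing the generalization error as $R(h)=\mathbb{E}_{x\sim D}[\,1(h(x)\neq c(x))\,]$ and the empirical error as $\widehat{R}(h)=\frac{1}{m}\sum_{i=1}^{m}1(h(x_i)\neq y_i)$; note the inequality $R(h)\le\widehat{R}(h)+\cdots$ only reads correctly under this (standard) assignment, with $R$ the true risk and $\widehat{R}$ its empirical counterpart. Because every $h\in\mathcal{H}$ takes values in $\{-1,+1\}$, the misclassification indicator admits the convenient algebraic form $1(h(x)\neq y)=\tfrac{1}{2}\bigl(1-y\,h(x)\bigr)$, which is the bridge I will use to pass between $\mathcal{H}$ and its loss class.

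Next I would introduce the loss class $\widetilde{\mathcal{H}}=\{(x,y)\mapsto 1(h(x)\neq y):h\in\mathcal{H}\}$, whose members take values in $[0,1]$, and invoke the generic empirical Rademacher bound for $[0,1]$-valued families: with probability at least $1-\delta$, every $g\in\widetilde{\mathcal{H}}$ satisfies $\mathbb{E}[g]\le\widehat{\mathbb{E}}[g]+2\widehat{\mathfrak{R}}_S(\widetilde{\mathcal{H}})+3\sqrt{\log(2/\delta)/(2m)}$. This generic bound is itself obtained from McDiarmid's bounded-differences inequality applied twice---once to concentrate the deviation of the supremum around its mean, and once more to replace the expected complexity by its empirical version---and I would cite it as the foundational result from the learning-theory reference rather than re-derive it. Specializing to $\widetilde{\mathcal{H}}$ gives $\mathbb{E}[g]=R(h)$ and $\widehat{\mathbb{E}}[g]=\widehat{R}(h)$, turning this into exactly the claimed inequality except with $2\widehat{\mathfrak{R}}_S(\widetilde{\mathcal{H}})$ in place of $\widehat{\mathfrak{R}}_S(\mathcal{H})$.

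The decisive step, and the one I expect to demand the most care, is proving $\widehat{\mathfrak{R}}_S(\widetilde{\mathcal{H}})=\tfrac{1}{2}\widehat{\mathfrak{R}}_S(\mathcal{H})$. Substituting the identity above into the definition of empirical Rademacher complexity splits the supremum into a constant term $\tfrac{1}{2}\sum_i\sigma_i$, whose expectation over the Rademacher variables vanishes by symmetry, and a term $-\tfrac{1}{2}\sum_i\sigma_i y_i h(x_i)$. The crux is the observation that for each fixed label $y_i\in\{-1,+1\}$ the product $-\sigma_i y_i$ is again uniform on $\{-1,+1\}$ and hence identically distributed to $\sigma_i$; this lets me absorb the label factor and conclude that $\mathbb{E}_\sigma[\sup_h\sum_i(-\sigma_i y_i)h(x_i)]=\mathbb{E}_\sigma[\sup_h\sum_i\sigma_i h(x_i)]$. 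What survives is precisely $\tfrac{1}{2}\widehat{\mathfrak{R}}_S(\mathcal{H})$, so the factor $2$ in the generic bound cancels this $\tfrac{1}{2}$, yielding the stated coefficient of $1$ on $\widehat{\mathfrak{R}}_S(\mathcal{H})$ and completing the argument.
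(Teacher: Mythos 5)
Your proof is correct and coincides with the argument the paper itself relies on: the paper states this theorem without proof, citing the learning-theory textbook (Mohri et al., \emph{Foundations of Machine Learning}), and your route --- reducing to the $[0,1]$-valued loss class via $1(h(x)\neq y)=\tfrac{1}{2}\bigl(1-y\,h(x)\bigr)$, invoking the generic McDiarmid-based bound with the factor $2\widehat{\mathfrak{R}}_S(\widetilde{\mathcal{H}})$, and then absorbing the labels into the Rademacher variables to obtain $\widehat{\mathfrak{R}}_S(\widetilde{\mathcal{H}})=\tfrac{1}{2}\widehat{\mathfrak{R}}_S(\mathcal{H})$ --- is exactly that textbook's proof. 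You were also right to silently correct the statement's transposed wording (it names $R(h)$ and $\widehat{R}(h)$ as the ``empirical and generalization errors'' in the wrong order); the inequality is only meaningful with $R(h)$ as the true risk and $\widehat{R}(h)$ as the empirical risk.
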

This theorem shows that the higher a hypothesis family's empirical Rademacher complexity, the worse the bounds of its generalization error.
The classification confidence-based redundancy measurement creates two hyperplanes parallel to $h=0$: points between them are considered redundant, and points outside them are considered not redundant.
Thus, the hypothesis family of the input filter's target concept has the form: $\mathcal{G}=\{\text{sign}( h(x) (h(x)+b))\}$, where $h \in \mathcal{H}$ and $b \in \mathbb{R}$.
Then we have proven the following lemma, which shows that the discussed inference workload is \textbf{not filterable}.

\begin{lemma}
\label{lemma:conf}
    Let $\mathcal{H}$ be a family of binary classifiers taking values in $\{-1, +1\}$.
    For $\mathcal{G}=\{\text{sign}(h(h+b))\}$ where $h\in \mathcal{H},b \in \mathbb{R}$:
    \begin{equation}
        \widehat{\mathfrak{R}}_S(\mathcal{G}) \geq \widehat{\mathfrak{R}}_S(\mathcal{H}).
    \end{equation}
\end{lemma}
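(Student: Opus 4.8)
The plan is to show that $\mathcal{H}$ sits inside $\mathcal{G}$ as a subfamily and then invoke the monotonicity of the empirical Rademacher complexity under set inclusion. The starting observation is that every $h \in \mathcal{H}$ takes values in $\{-1, +1\}$, so for a fixed input $x$ the argument of the sign can take only two values: when $h(x) = +1$ we have $h(x)(h(x)+b) = 1+b$, and when $h(x) = -1$ we have $h(x)(h(x)+b) = 1-b$. Hence each $g \in \mathcal{G}$ is determined pointwise purely by the signs of the pair $(1+b,\,1-b)$, and I would organize the argument as a case analysis on $b$.

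First I would run through the three regimes of $b$. For $-1 < b < 1$ both $1+b$ and $1-b$ are positive, so $g \equiv +1$ is the constant classifier; for $b < -1$ one checks that $g \equiv -h$; and for $b > 1$ we have $1+b > 0$ while $1-b < 0$, which gives $g(x) = h(x)$ for every $x$. The last regime is the crucial one: it shows that for any $h \in \mathcal{H}$, choosing any fixed $b > 1$ produces a member of $\mathcal{G}$ that coincides exactly with $h$. Taking $b$ strictly greater than $1$ keeps us away from the ambiguous value $\text{sign}(0)$ at $b = \pm 1$. Therefore $\mathcal{H} \subseteq \mathcal{G}$.

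Finally I would apply the standard monotonicity of the empirical Rademacher complexity: if $\mathcal{F}_1 \subseteq \mathcal{F}_2$ then $\widehat{\mathfrak{R}}_S(\mathcal{F}_1) \leq \widehat{\mathfrak{R}}_S(\mathcal{F}_2)$, which is immediate since the defining supremum over $\mathcal{F}_2$ ranges over a superset of the supremum over $\mathcal{F}_1$ for every fixed Rademacher assignment $\sigma$, and expectation preserves the inequality. Instantiating this with $\mathcal{F}_1 = \mathcal{H}$ and $\mathcal{F}_2 = \mathcal{G}$ yields $\widehat{\mathfrak{R}}_S(\mathcal{H}) \leq \widehat{\mathfrak{R}}_S(\mathcal{G})$, which is the stated claim.

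I do not expect a genuine obstacle: the whole argument reduces to a pointwise evaluation of $\text{sign}(h(h+b))$ on the binary range of $h$. The only point requiring care is selecting the regime of $b$ that reconstructs $h$ itself rather than its negation or a constant, and excluding the boundary values where $\text{sign}$ is undefined. The conceptual content is simply that the confidence-thresholding composition $f_h \circ h$ does not shrink the hypothesis family but in fact enlarges it (it additionally contains the constant and negated classifiers), so the filter's learning problem can only be at least as complex as the original inference problem, confirming that this workload is not filterable.
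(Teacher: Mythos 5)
Your proof is correct and takes essentially the same approach as the paper: the paper likewise exploits that $\text{sign}(h(h+b)) = h$ pointwise once $b>1$ (it fixes $b=2$), and then lower-bounds the supremum defining $\widehat{\mathfrak{R}}_S(\mathcal{G})$ by restricting to that choice of $b$ — which is exactly your inclusion $\mathcal{H}\subseteq\mathcal{G}$ combined with monotonicity of the empirical Rademacher complexity. Your extra case analysis of the regimes $|b|<1$ and $b<-1$ is correct but not needed for the inequality.
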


\begin{proof}
    By definition,
    $$\widehat{\mathfrak{R}}_S(\mathcal{H})=E_\sigma[\sup_{h\in \mathcal{H}}(\frac{1}{m}\sum_{i=1}^m \sigma_i h(x_i))] $$ 
    and 
    $$\widehat{\mathfrak{R}}_S(\mathcal{G})=E_\sigma[\sup_{h\in \mathcal{H}, b\in \mathbb{R}}(\frac{1}{m}\sum_{i=1}^m \sigma_i \text{sign}(h(x_i)(h(x_i)+b))],$$
    where Rademacher variables $\sigma_i \in \{-1, +1\}$.
    Fixing $b=2$, 
    \begin{align*}
         \widehat{\mathfrak{R}}_S(\mathcal{G}) &\geq E_\sigma[\sup_{h\in \mathcal{H}, x_i \in S}(\frac{1}{m}\sum_{i=1}^m \sigma_i \text{sign}(h(x_i)(h(x_i)+2))]\\
         &= E_\sigma[\sup_{h\in \mathcal{H}, x_i \in S}(\frac{1}{m}\sum_{i=1}^m \sigma_i \text{sign}(h(x_i))]=\widehat{\mathfrak{R}}_S(\mathcal{H}),
    \end{align*}
    where we used the fact that $\text{sign}(h(x_i)+2)\equiv 1$.
\end{proof}

Multi-class classifiers can be treated as a set of confidence scoring functions, one for each class.
The above lemma can also be applied to derive that multi-class classifiers using such a confidence-based redundancy measurement are not filterable either.

\subsection{Class Subset as Redundancy}
\label{subsec:subset}

Considering the inference model $h$ as a multi-class mono-label classifier and $\mathcal{Y}=\{y_1, ..., y_l\}$.
Then its hypothesis family $\mathcal{H}$ has the form: $\mathcal{H}=\{\max(h_1, ..., h_l) : h_i \in \mathcal{H}_i, i\in [1,l]\}$, where $h_i$ returns the probability of the $i$-th class.
The redundancy measurement checks whether the predicted class belongs to a specific subset, i.e., $f_h(y)=1(y \in \mathcal{Y}')$, where $\mathcal{Y}'\subseteq \mathcal{Y}$.
It is common in mobile applications to select only a subset of labels for use.
For example, when deploying a pre-trained common object detector~\cite{mscoco} on a drone for traffic monitoring, we only care about the labels of vehicles and pedestrians, while considering other labels like animals and trees as redundancy.
With the class subset-based redundancy measurement, the hypothesis family of the input filter's target concept has the form: $\mathcal{G}=\{\max(h_i): y_i \in Y'\}$.
We have proven the following lemma, which shows that the discussed inference workload is \textbf{filterable}:
\begin{lemma}
Let $\mathcal{H}_1, ..., \mathcal{H}_l$ be $l$ hypothesis sets in $\mathbb{R}^\mathcal{X}$, $l\geq 1$, and let $\mathcal{H}=\{\max(h_1, ..., h_l) : h_i\in \mathcal{H}_i,i=1,...,l \}$.
For $\mathcal{G}=\{\max(h_i) : i \in J\}$, where $J\subseteq \{1,...,l\}$:
\begin{equation}
    \widehat{\mathfrak{R}}_S(\mathcal{G}) \leq \widehat{\mathfrak{R}}_S(\mathcal{H}).
\end{equation}
\end{lemma}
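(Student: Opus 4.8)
The plan is to reduce the inequality to the monotonicity of empirical Rademacher complexity under class inclusion, so that the only real work is a set-theoretic claim. Concretely, I would first observe that it suffices to establish the inclusion $\mathcal{G}\subseteq\mathcal{H}$: for every fixed realization of the Rademacher variables $\sigma=(\sigma_1,\dots,\sigma_m)$, the quantity $\sup_{g\in\mathcal{G}}\frac{1}{m}\sum_{t=1}^m\sigma_t g(x_t)$ is a supremum over a subset of the functions already available to $\sup_{h\in\mathcal{H}}\frac{1}{m}\sum_{t=1}^m\sigma_t h(x_t)$, so the former is at most the latter pointwise in $\sigma$. Taking $E_\sigma$ of both sides then gives $\widehat{\mathfrak{R}}_S(\mathcal{G})\leq\widehat{\mathfrak{R}}_S(\mathcal{H})$ with no further estimate needed. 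This isolates all the difficulty into proving $\mathcal{G}\subseteq\mathcal{H}$.

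For the inclusion, I would relabel the classes so that $J=\{1,\dots,k\}$ with $k\leq l$; since the complexity is invariant under permuting the index set, this relabeling is free. The crux is to show that every $g=\max_{i\in J}h_i$ with $h_i\in\mathcal{H}_i$ can be realized as $\max_{i=1}^l h_i'$ for suitable $h_i'\in\mathcal{H}_i$ ranging over \emph{all} $l$ classes. For $i\in J$ I keep $h_i'=h_i$; for $i\notin J$ I would select a member $h_i'\in\mathcal{H}_i$ that is dominated by $g$ pointwise, so that the extra terms never attain the maximum and $\max_{i=1}^l h_i'=\max_{i\in J}h_i=g$. Because the components are class-probability scores, $g\geq 0$, and the degenerate ``always predict probability $0$'' hypothesis (the zero function) supplies exactly such a dominated member of each $\mathcal{H}_i$; this yields $g\in\mathcal{H}$ and hence $\mathcal{G}\subseteq\mathcal{H}$.

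I expect this padding step to be the main obstacle, and it is where the structure of the problem genuinely enters rather than being a formality. The inequality is \emph{false} for arbitrary real-valued hypothesis families: if $\mathcal{H}_2$ is forced to sit strictly above $\mathcal{H}_1$ (e.g. a single constant larger than everything in $\mathcal{H}_1$), then $\max(\mathcal{H}_1,\mathcal{H}_2)$ collapses to one function of zero complexity while $\mathcal{H}_1$ alone has positive complexity, reversing the claimed bound. The nonnegativity of probability outputs together with the availability of a dominated hypothesis in each discarded class is precisely what excludes this pathology, so I would state it explicitly as the operative assumption. As a fallback that avoids assuming the zero function lies in each $\mathcal{H}_i$, one can run an inductive ``drop one class at a time'' argument establishing $\widehat{\mathfrak{R}}_S(\{\max_{i\leq k}h_i\})\leq\widehat{\mathfrak{R}}_S(\{\max_{i\leq k+1}h_i\})$, but this still reduces to neutralizing the added class, so the same structural input is unavoidable.
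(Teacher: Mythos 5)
Your proof is correct and is, in effect, a rigorous completion of the paper's argument rather than a reproduction of it. The paper's proof is a two-line chain: it writes $\widehat{\mathfrak{R}}_S(\mathcal{H})$ as $\frac{1}{m}E_\sigma\bigl[\sup \sum_{i}\sigma_i \max_{k\le l} h_k(x_i)\bigr]$ and simply asserts that replacing $\max_{k\le l}$ by $\max_{j\in J}$ cannot increase this quantity, concluding $\widehat{\mathfrak{R}}_S(\mathcal{H})\ge\widehat{\mathfrak{R}}_S(\mathcal{G})$ with no further justification. As you point out, none exists at that level of generality: because the Rademacher variables take the value $-1$, pointwise domination of $\max_{k\le l}h_k$ over $\max_{j\in J}h_j$ says nothing about the Rademacher averages, and the supremum over the larger product family dominates only if the discarded classes can be neutralized. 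Your counterexample (take $\mathcal{H}_2$ to be a single constant lying above every function in $\mathcal{H}_1$, so that $\mathcal{H}$ collapses to a singleton of zero complexity while $\mathcal{G}=\mathcal{H}_1$ has positive complexity) shows the lemma is false for arbitrary hypothesis sets in $\mathbb{R}^{\mathcal{X}}$, i.e., the paper's own proof has a genuine gap. Your padding argument --- realizing each $g=\max_{i\in J}h_i$ as $\max_{i\le l}h_i'$ by choosing, for $i\notin J$, a hypothesis $h_i'\in\mathcal{H}_i$ dominated by $g$ (e.g., the zero scoring function), and then invoking monotonicity of $\widehat{\mathfrak{R}}_S$ under class inclusion --- is exactly the missing step, at the price of an explicit structural assumption (nonnegative scores and a dominated member available in each discarded class) that is natural for the class-probability scoring functions the lemma is meant to model, but which the paper never states. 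In short: both arguments rest on the intuition that $\mathcal{G}$ is a sub-family of $\mathcal{H}$, but your version supplies both the justification and the hypothesis needed to make that intuition true, and correctly flags that without such a hypothesis the stated inequality can reverse.
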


\begin{proof}
For any $j=1,...,l$:
\begin{align*}
    \widehat{\mathfrak{R}}_S(\mathcal{H}) &= \frac{1}{m} \mathop{E}_\sigma[\sup_{x_i \in S} \sigma_i \max_{h_k\in \mathcal{H}_k}(h_k(x_i))] \\
    &\geq \frac{1}{m} \mathop{E}_\sigma[\sup_{x_i \in S} \sigma_i \max_{j\in J}(h_j(x_i))]=\widehat{\mathfrak{R}}_S(\mathcal{G}).
\end{align*}
\end{proof}
The equation holds only if the max-value scoring function is in the selected subset for all $x_i \in S$, which means that without loss of inference accuracy, the optimal filterable ratio in the data is 0.
Except in this extreme case, we can think that the complexity of learning the input filter is strictly lower.

\subsection{Regression Bound as Redundancy}
\label{subsec:regression}

Considering a bounded regression model $h$, whose outputs are bounded by $M \in \mathbb{R}$ that $|h(x)-c(x)|\leq M$ (recall that $c$ is the target concept) for all $x \in X$.
The redundancy measurement checks whether the returned value is larger than a threshold, i.e., $f_h(y)=1(y > T)$.
As an example, face authentication on mobile devices usually requires the coordinates of the detected face to be within the specified range.
Then learning the target concept of the input filter becomes learning a regression model whose outputs are bounded by $T$, where $T < M$.
We also adopt the empirical Rademacher complexity and have the following theorem~\cite{foundationML}:
\begin{theorem}
Let $p\geq 1$ and $\mathcal{H}=\{x \mapsto |h(x)-c(x)|^p : h\in H\}$.
Assume that $|h(x)-c(x)| \leq M$ for all $x\in X$ and $h \in H$.
Then the following inequality holds:
  $ \widehat{\mathfrak{R}}_S(\mathcal{H}) \leq pM^{p-1}\widehat{\mathfrak{R}}_S(H)$.
\end{theorem}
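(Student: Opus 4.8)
The plan is to recognize this bound as an instance of the Ledoux--Talagrand contraction principle, since the loss class $\mathcal{H}$ is obtained by composing the hypothesis-dependent quantity $h(x)-c(x)$ with the fixed scalar map $\Phi(t)=|t|^p$. First I would rewrite $\mathcal{H}=\Phi\circ(H-c)$, where $H-c=\{h-c : h\in H\}$, and observe that the boundedness assumption $|h(x)-c(x)|\leq M$ confines every argument of $\Phi$ to the interval $[-M,M]$. On this interval $\Phi$ is differentiable away from the origin with $|\Phi'(t)|=p|t|^{p-1}\leq pM^{p-1}$, so $\Phi$ is Lipschitz with constant $L=pM^{p-1}$; this is precisely the factor appearing in the claimed inequality.

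Next I would invoke the contraction step to pass $\Phi$ outside the Rademacher average, yielding $\widehat{\mathfrak{R}}_S(\Phi\circ(H-c))\leq pM^{p-1}\,\widehat{\mathfrak{R}}_S(H-c)$. Finally, because subtracting the fixed target concept $c$ only adds the $h$-independent term $-\frac{1}{m}\sum_i \sigma_i c(x_i)$ inside the supremum, which factors out of the sup and has zero mean under the symmetric Rademacher variables $\sigma_i$, translation invariance gives $\widehat{\mathfrak{R}}_S(H-c)=\widehat{\mathfrak{R}}_S(H)$. Chaining these two facts delivers $\widehat{\mathfrak{R}}_S(\mathcal{H})\leq pM^{p-1}\widehat{\mathfrak{R}}_S(H)$, so the verification of the Lipschitz constant and this shift argument are the only routine pieces.

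The main obstacle is the contraction step itself, which (if not simply cited from the reference used for the generalization bounds) I would prove by peeling off one Rademacher variable at a time. Writing $u(h)=\frac{1}{m}\sum_{i<m}\sigma_i\Phi(h(x_i)-c(x_i))$ and taking the expectation over $\sigma_m$ alone, the average becomes $\frac{1}{2}\sup_{h,h'}[u(h)+u(h')+\Phi(v)-\Phi(v')]$ with $v=h(x_m)-c(x_m)$ and $v'=h'(x_m)-c(x_m)$. The delicate point is to bound $\Phi(v)-\Phi(v')$ by $L|v-v'|$ via the Lipschitz property and then, using the symmetry of the supremum in the pair $(h,h')$, discard the absolute value to recover $L\sigma_m(h(x_m)-c(x_m))$ in expectation; iterating over all coordinates $i=1,\dots,m$ completes the argument. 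I would remark that because the empirical Rademacher complexity here is defined without an absolute value around the inner sum, no extra condition such as $\Phi(0)=0$ is needed, so the clean constant $pM^{p-1}$ is obtained.
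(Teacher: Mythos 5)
Your proof is correct, and it coincides with the canonical argument behind this statement: the paper itself gives no proof at all, importing the theorem verbatim from the cited reference~\cite{foundationML}, where it is established exactly as you do --- by applying Talagrand's contraction lemma to the $pM^{p-1}$-Lipschitz map $t \mapsto |t|^p$ on $[-M,M]$, and then using translation invariance of the (absolute-value-free) empirical Rademacher complexity to absorb the fixed target concept $c$. Your closing remark is also on point: with the supremum-without-absolute-value definition used in this paper, no condition like $\Phi(0)=0$ is needed, so the constant $pM^{p-1}$ comes out clean.
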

Since $M>T$, this theorem shows that the upper bound of $\widehat{\mathfrak{R}}_S(\mathcal{G})$ is tighter than the upper bound of $ \widehat{\mathfrak{R}}_S(\mathcal{H})$.
So we can be confident that the bounded regression inference workload discussed is \textbf{filterable}.

\subsection{Discussions}

\textbf{Other inference tasks.}
Classification and regression are the most common inference tasks, and the three redundancy measurements discussed are widely adopted~\cite{adams,adams-tkdd,ff}.
However, there are some inference tasks that are challenging to measure the hypothesis complexity, like reinforcement learning~\cite{rl-rademacher} and structured learning~\cite{structured-complex}.
Besides, their redundancy measurements are typically ill-defined.
We believe that our problem formalization and analysis approach are general, based on which we will analyze the filterability of other tasks in future work.

\textbf{Characteristics of REUSE.}
For the REUSE approach, we cannot determine the hypothesis family of the input filter's target concept.
Here we only give one necessary condition: the inference result is discrete or can be discretized.
For example, classification and counting models return discrete outputs. 
But continuous localization coordinates of detection models cannot be reused directly unless reusing detection results with high IoU are regarded as correct, which is equivalent to discretizing the outputs.

\section{Framework}
\label{sec:framework}

In this section, we first propose a novel input filtering framework that unifies SKIP and REUSE approaches.
Then we discuss how existing approaches are covered by our framework and their limitations.
Finally, we present the key design, end-to-end learnability, and advantages it brings.

\subsection{SKIP as REUSE}

We unify SKIP and REUSE approaches based on the idea that:
\begin{displayquote}
    \textit{SKIP equals to REUSE the NONE output of $h(\vec{0})$.}
\end{displayquote}
Suppose we have an all-zero input $\vec{0}$ and apparently its inference result can be interpreted as NONE. 
Then given a new input $x$, if it is similar to $\vec{0}$ in the feature space, we can REUSE the cached NONE result, i.e., we SKIP the inference computation.
The key to reuse is to measure the semantic similarity between the current input and previously cached ones.
However, it is difficult to accurately measure semantic similarity directly based on the raw input.
As Step 1 in Fig.~\ref{fig:framework} illustrated, our framework first computes the feature embedding of each raw input.
Taking a pair of inputs $x, x'$, then our framework applies a difference function $d$ on their corresponding embeddings $e, e'$ and feeds the result into a classifier that predicts a single scalar $z$.
Under this framework, for SKIP, we fix $x'$ as an all-zero input $\vec{0}$, then the process degenerates to a binary classification task that takes $x$ as input and returns the prediction $z$.
In this way, our framework unifies SKIP and REUSE approaches, with only a difference in interpretation of the value $z$.
For REUSE, we interpret $z$ as the distance between two inputs.
For SKIP, we interpret $z$ as the probability that input $x$ is not redundant.

\begin{figure}[t]
    \centering
    \includegraphics[width=0.98\linewidth]{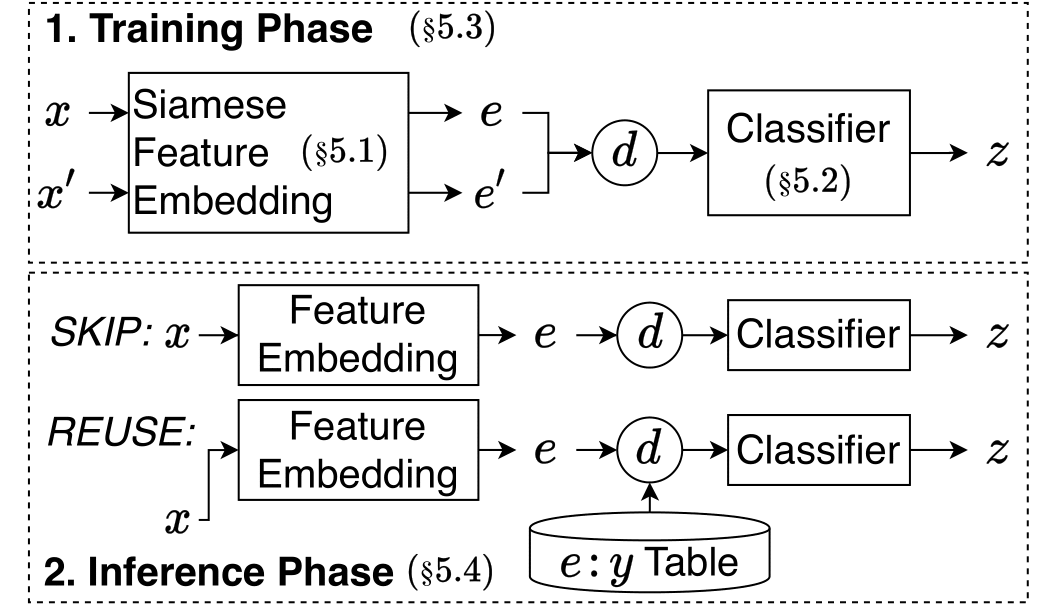}
    \caption{Unified and end-to-end learnable framework for both SKIP and REUSE input filtering.}
    \label{fig:framework}
\end{figure}

\subsection{Inference with an Input Filter}
\label{subsec:inference-with-filter}

For the inference phase, as shown in  Step 2 in Fig.~\ref{fig:framework}, SKIP and REUSE filters only differ in the inputs of the difference function $d$.
(1) \textbf{SKIP:}
Inference with a SKIP filter is the same as serving a binary classifier.
We can set a threshold on the predicted redundancy score $z$ to determine whether to skip.
(2) \textbf{REUSE:}
Inference with a REUSE filter needs to maintain a key-value table, where a key is a feature embedding and its value is the corresponding inference result.
For an arrived input $x$, the trained feature embedding network returns its embedding $e$ and the distances $z$ between $e$ and cached keys are computed by the difference function $d$ and the trained classifier.
Then we can leverage classification algorithms, e.g., KNN, to obtain the reusable cached results.

\subsection{Sub-Instance Approaches}
\label{subsec:sub-instance}

Here we explain how our framework covers three state-of-the-art input filtering methods~\cite{ff,reducto,foggycache} that will be used for comparison in our evaluations.

\textbf{Sub-instance1:}
FilterForward (FF)~\cite{ff} is a SKIP method for image input.
FF uses a pre-trained MobileNet's intermediate output as the feature embedding.
Then it trains a ``micro-classifier'' that consists of convolution blocks to make the binary decision for filtering.

\textbf{Sub-instance 2:}
FoggyCache (FC)~\cite{foggycache} is a REUSE method for image and audio input.
FC uses low-level features (SIFT for image, MFCC for audio) and applies locality-sensitive hashing (LSH) for embedding.
Then FC uses L2 norm as the difference function and applies KNN to get the reusable inference results from previously cached ones.

\textbf{Sub-instance 3:}
Reducto~\cite{reducto} is a variant of the SKIP method for video input.
It measures low-level features (pixel, edge, corner, area) differences between successive frames.
If they are similar enough, Reducto skips the current frame and returns the latest result.
Formally, let $x$ be the current frame and $x'$ be the previous frame.
Reducto defines $d(e, e')=(e-e')/e'$, where $e, e'$ are low-level features of $x,x'$.
It uses a threshold function as the classifier, i.e., $1(d(e,e') > T)$.

\subsection{End-to-end Learnability}
To obtain features with robust discriminability for diverse data modalities and inference tasks in mobile applications, a key design principle of our framework is end-to-end learnability.
End-to-end learning system casts complex processing components into coherent connections in deep neural networks~\cite{e2e-limit} and optimizes itself by applying gradient-based back-propagation algorithms all through the networks.
Deep end-to-end models have shown state-of-the-art performance on various tasks including autonomous driving~\cite{auto-drive} and speech recognition~\cite{deepspeech2}.
As aforementioned, a main component of our unified framework is to measure the semantic similarity between two inputs.
To make our framework end-to-end learnable, we leverage the metric learning paradigm, whose goal is to learn a task-specific distance function on two objects.
The metric learning paradigm turns the fixed difference function $d$ (e.g., Euclidean distance and L2 norm) used by existing methods into an end-to-end learnable network.
Within the metric learning paradigm, we adopt Siamese network structure~\cite{siamese} for feature embedding to support two inputs and flexible input modalities.
The Siamese network uses the same weights while working on two different inputs to compute comparable output vectors, and has been successfully applied in face verification~\cite{face-verification}, pedestrian tracking~\cite{pedestrian-track}, etc.
We can flexibly implement the Siamese feature embedding by incorporating different neural network blocks to learn modality-specific features in an end-to-end manner, instead of tailoring handcrafted or pre-trained feature modules.
Our experimental results show that the end-to-end learned features have robust discriminability to diverse inference workloads in mobile-centric AI applications.

\section{Design of InFi}
\label{sec:design}

Based on our input filtering framework, in this section, we present the concrete design of \textit{InFi} (INput FIlter), which supports both SKIP and REUSE functions, named \textit{InFi}-Skip and \textit{InFi}-Reuse.
The design of \textit{InFi} has four key components: feature embedding, classifier, training mechanism, and inference algorithm. We also discuss diverse deployments of \textit{InFi} in AI applications on mobile, edge, and cloud devices.

\subsection{Feature Networks for Diverse Input Modalities in Mobile-Centric AI}
\label{subsec:modality-net}
\textit{InFi} supports filtering inference workloads with six typical input modalities in mobile applications: text, image, video, audio, sensor signal, and feature map.
We develop a collection of modality-specific feature networks as building blocks for learning feature embedding.
Our major consideration in designing these feature networks is resource efficiency on mobile devices.

\textbf{Text modality ($g_{text}$).}
Text is tokenized into a sequence of integers, where each integer refers to the index of a token.
We adopt the word-embedding layer to map the sequence to a fixed-length vector by a transformation matrix and use a densely connected layer with a Sigmoid activation to learn the text features.

\textbf{Image modality ($g_{image}$).}
We use depth-wise separable convolution~\cite{separablecnn}, denoted by $SepConv$, to learn visual features.
$SepConv$ is a parameter-efficient and computation-efficient variant of the traditional convolution which performs a depth-wise spatial convolution on each feature channel separately and a point-wise convolution mixing all output channels.
Then we build residual convolution blocks~\cite{residual} $ConvRes$ as follows:
\begin{align*}
    &ConvStep(x) = LN(SepConv(ReLU(x))), \\
    &c_1(x) = ConvStep(x), c_2(x) = ConvStep(c_1(x)), \\
    &ConvRes(x) = MaxPool2D(c_2(x)) + ConvStep(x),
\end{align*}
where $ReLU$ denotes the rectified linear unit, $LN$ denotes the layer normalization and $MaxPool2D$ denotes the 2D max-pooling layer.
Finally, we build the image feature network with two residual blocks followed by a global max-pooling layer and a Sigmoid-activated dense layer.

\textbf{Video modality ($g_{video}$).}
For video modality, we need to represent not only the spatial but also the temporal features.
Given a window of frames, we stack one residual block for each frame and then concatenate their resulting feature maps.
Except for the first residual block, the video feature network performs the same operation as the image feature network.

\textbf{Audio modality ($g_{audio}$).}
We consider audio inputs in the form of either a 1D waveform or a 2D spectrogram and use the same structure as image feature networks to learn features from audio.

\textbf{Sensor signal and feature map modality ($g_{vec}$).}
Motion sensors are widely used in mobile devices and play a key role in many smart applications, e.g., gyroscope for augmented reality~\cite{gyroscope-ar} and accelerator for activity analysis~\cite{ucihar}.
Feature maps refer to the intermediate outputs of deep models and need to be transmitted in workloads that involve model partitioning~\cite{model-partition}.
We consider these two types of input as a vector with a fixed shape and use two densely connected layers to learn the feature embedding from the flattened vector.

\textbf{Flexible support for input modalities.}
Our design provides flexible support for diverse input modalities in mobile-centric AI applications.
We can easily integrate a modality-specific neural network from advanced machine learning research as the feature network block into our framework, so as to learn feature embeddings in an end-to-end way.

\subsection{Task-Agnostic Classifier}
Each feature network $g_{modality}$, where $modality$ belongs to \{text, image, video, audio, vec\},  takes $x$ as input and outputs the embedding $emb$. 
We add a dropout layer after the last dense layer of feature networks to reduce overfitting.
Following the previous design of Siamese network~\cite{siamese}, we use the absolute difference as the function $d$.
Let $emb_1, emb_2$ denote the embedding outputs of two inputs $x_1, x_2$.
The classifier is defined as $g_{cls} = \sigma(\sum_j w_j |emb_1^{(j)} - emb_2^{(j)}| + b)$, where $emb^{(j)}$ denotes the j-th element in the embedding vector and $\sigma$ is the Sigmoid function.
To sum up, the input filter function $g: \mathcal{X} \rightarrow \mathcal{Z}$ can be defined as $g(x) = (g_{cls} \circ g_{modality})(x)$.
With proper implementation, the modality of input data can be automatically detected without manually setting.

\begin{figure}[t!]
    \centering
    \includegraphics[width=0.95\linewidth]{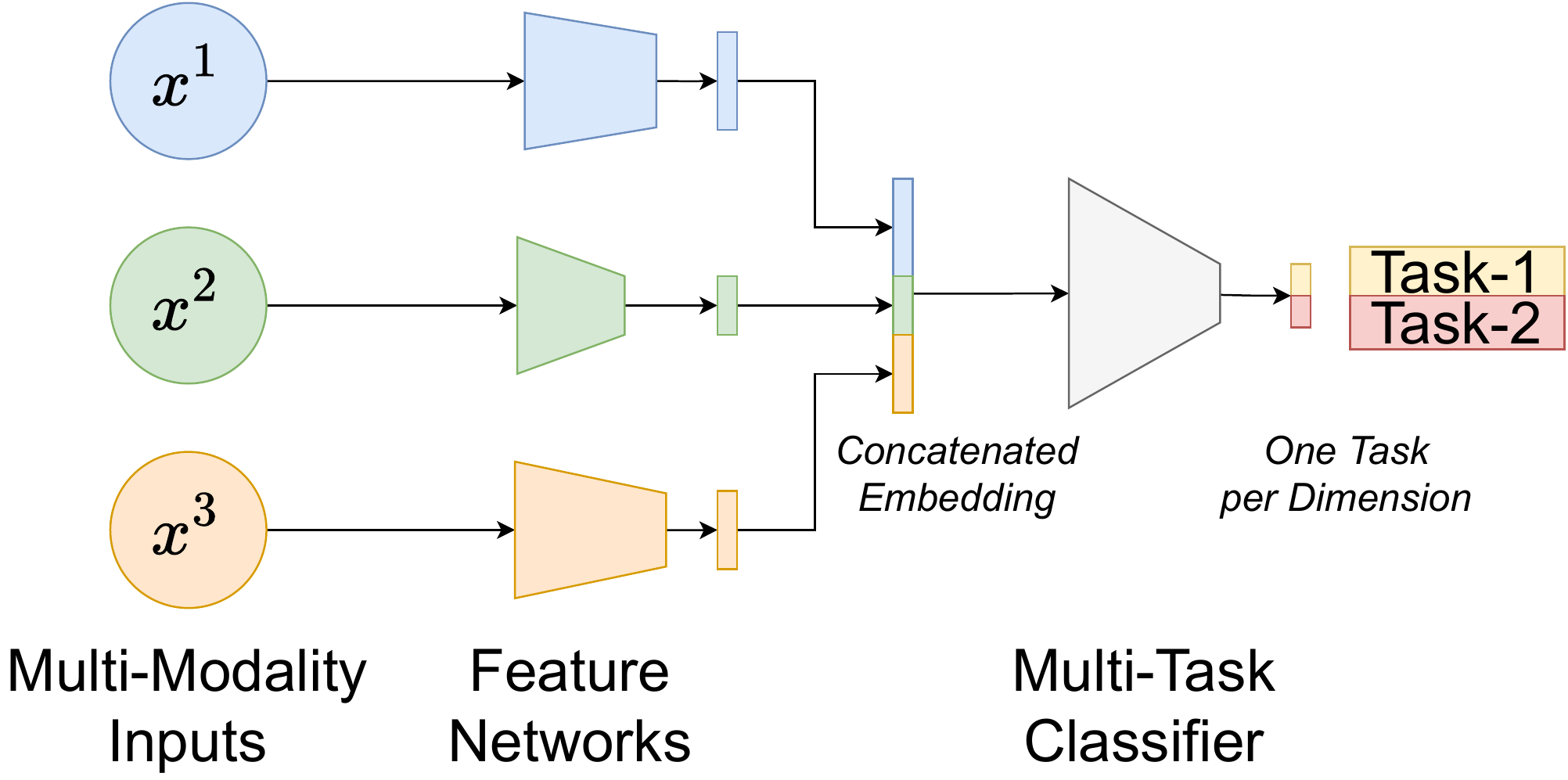}
    \caption{Extend input filtering to multi-modal and multi-task workloads. $x^1,x^2,x^3$ represents three input modalities.}
    \label{fig:mtask}
\end{figure}

\subsection{Multi-Task Extension}
\label{subsec:multitask}

The above design is described for single-task workloads, however, it is common to concurrently run multiple AI models in real applications.
We will show that the design of \textit{InFi} can be flexibly extended to multi-modal and multi-task inference workloads.

\textbf{Multi-modality single-task.}
Multi-modal learning aims to learn AI models given multiple inputs with different modalities, which is receiving increasing attention in areas such as autonomous driving~\cite{auto-drive}.
Our designs of modality-specific feature networks and task-agnostic classifier naturally support multi-modal extension: 
For each modality $mod \in$ \{text, image, video, audio, vec\}, we build the corresponding feature network $g_{mod}$ to learn its embedding.
Then we concatenate the resulting embeddings and feed it to the classifier $g_{cls}$.

\textbf{Single-modality multi-task.}
It is common to deploy multiple AI models to analyze the same input, e.g., detecting vehicles and classifying traffic conditions on the same video stream.
For input filtering, we simply extend the length of the last dense layer in the classifier $g_{cls}$, one dimension per task.
Existing work on multi-task learning~\cite{mtl} demonstrates that the cross-task representation improves learning performance.
Formally, given $t$ tasks, it has been proven that the sample complexity needed~\cite{theory-mtl} is as follows:
\begin{equation}
    Complex(g_{mod}) + t \cdot Complex(g_{cls}).
\end{equation}
That is, we can save $(t-1)\cdot Complex(g_{mod})$ sample complexity, compared with learning a filter for each task separately.
And our experimental results (Fig.~\ref{fig:mtl-infi}) also show that the cross-task representation is beneficial for input filtering.

\textbf{Multi-modality multi-task.}
Considering a general case where we need to filter inputs for multi-modality and multi-task workloads, we can combine the above two extensions, as shown in Fig.~\ref{fig:mtask}:
For each input modality, we build the corresponding feature network and concatenate the resulting embeddings;
And for each task, we build a multi-dimension classifier, one dimension per task, which takes the concatenated embedding as input.
Compared with the naive way that deploys independent \textit{InFi} for different inference workloads, our proposed extension saves computation and leverages potential advantages of cross-task and cross-modality representation.

\subsection{End-to-End Training}
\label{subsec:training}

\textit{InFi}-Skip and \textit{InFi}-Reuse share the same model architecture, but have different formats of training data.
1) Learning an \textit{InFi}-Skip filter uses the same paradigm as training a binary classifier.
Thus its training samples are ${(x_i, f_h(h(x_i)))}_{i=1}^n$ and we use the binary cross-entropy loss function.
In practice, we can use the original training set of $h$ or data collected during serving $h$.
Since $f_h$ only depends on the inference result, the supervision labels can be collected automatically.
2) \textit{InFi}-Reuse filters are trained using the contrastive loss~\cite{contrastive-loss} with a margin parameter of one.
Given a set of input and their discrete inference results, the redundancy measurement is defined as the distance metric between a pair of inputs.
Formally, a training sample consists of a pair of inputs and their distance label $(x_i, x_j, 1(y_i \neq y_j))$.
We can optimize all trainable parameters end-to-end, using standard back-propagation algorithms.

\textbf{Online active update.}
Unlike benchmark datasets, the distribution of real-world inputs, e.g., the video streams captured by surveillance cameras, is much narrower and changes online~\cite{online-kd}.
In a video-based vehicle counting application (see Sec.~\ref{subsec:config} for detailed setup), we explored the shifted distribution of frames over wall-clock time.
As shown in Fig.~\ref{fig:oneday-box}, the vehicle count varies with time.
There are two distinct count peaks in the morning and evening and the nighttime results remain stable at low values.
We noticed that the captured frames switched between infrared (IR) and RGB images when the lighting condition changed.
The RGB-IR technology provides day and night vision capability for cameras and is supported by popular commercial sensors. 
We split all frames into infrared and RGB subsets and plot their distribution over the number of detected vehicles in Fig.~\ref{fig:cdf-infrared}.
We can see a clear difference in the distribution.
Therefore, a vanilla offline training policy that selects initial samples (e.g., frames in the first hour) for training the input filter results in sub-optimal performance quickly.
To overcome the poor adaptability of offline training, we adopt the \textit{least confidence}~\cite{active} strategy to actively select samples for updates on the fly.
Existing work~\cite{active-complexity} proved that the sample complexity of active learning is asymptotically smaller than passive learning.
Specifically, we set a period length and a sampling ratio $\beta$\%.
Then we execute the input filter on all inputs within a period and select $\beta$\% samples with the least confidence ($|g(x)-0.5|$).
For \textit{InFi}-Reuse, we treat $1-\theta$ as the confidence score.
Our experimental results (Fig.~\ref{fig:online-active}) show that, given the same budget for the number of training samples, this active strategy significantly outperforms the offline one.

\begin{figure}[t]
     \centering
     \begin{subfigure}[b]{0.49\linewidth}
         \centering
         \includegraphics[width=\linewidth]{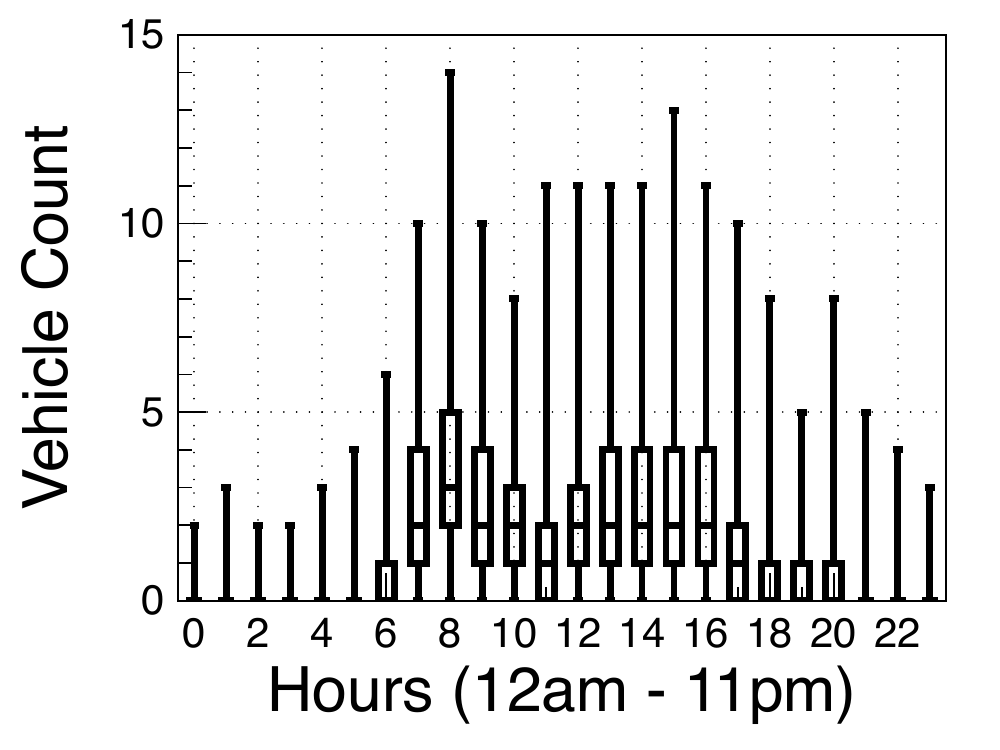}
         \caption{One-day trace of vehicle count.}
         \label{fig:oneday-box}
     \end{subfigure}
     \hfill
     \begin{subfigure}[b]{0.49\linewidth}
         \centering
         \includegraphics[width=\linewidth]{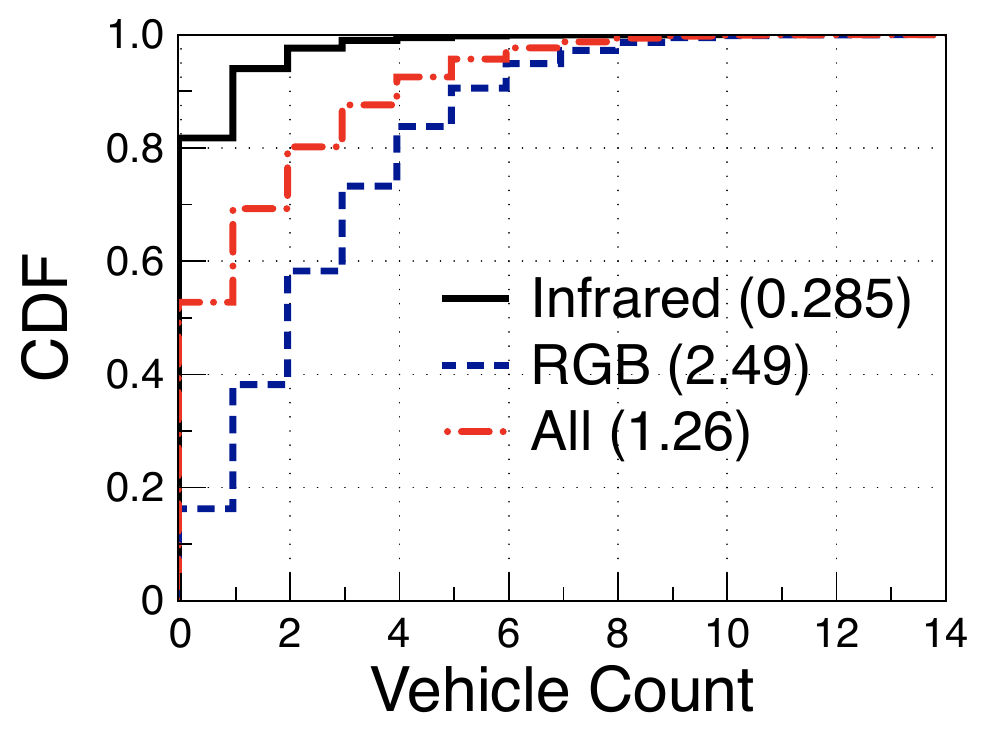}
         \caption{CDF of vehicle count on infrared and RGB frames.}
         \label{fig:cdf-infrared}
     \end{subfigure}
\caption{Distribution shifts in online video streams. (b) The means of the distributions are given inside parentheses.}
\label{fig:active-moti}
\end{figure}

\begin{algorithm}[t!]
\small
\SetKwInOut{Input}{input}
\SetKwProg{Def}{def}{:}{}
\SetKwFunction{KNN}{HKNN}
\SetKwFunction{read}{read}
\SetKwFunction{inference}{inference}
\SetKwFunction{Len}{Len}
\SetKwFunction{Replace}{replace}

    \Input{input source $src$, redundancy threshold $T$, cache size $s$, KNN parameter $K$, homogeneity threshold $\theta_T$}
    \Def{InFiSkip(src)}{
        \While{$x \leftarrow \read(src)$}{
            \uIf{$g(x) > T$}{
                $y \leftarrow \inference(x)$\;
            }
            \Else{$y \leftarrow None$\;}
        }
    }
    
    \Def{InFiReuse(src)}{
        Initialize empty $cache$\;
        \While{$x \leftarrow \read(src)$}{
            \uIf{$\Len(cache) < s$}{
                $y \leftarrow \inference(x)$\;
                $cache[g_{modality}(x)] \leftarrow y$\;
            }
            \Else{ 
                $y,\theta \leftarrow \KNN(cache, g_{modality}(x), g_{cls}, K)$\;
                \If{$\theta < \theta_T$}{
                    $y \leftarrow \inference(x)$\;
                    \Replace($cache$, \{$g_{modality}(x): y$\})\;
                }
            }
        }
    }
\caption{Inference with an \textit{InFi} Filter}
\label{alg:infi}
\end{algorithm}

\subsection{Inference Phase}

After training an \textit{InFi} filter, we integrate it into the original inference workload using Algorithm.~\ref{alg:infi}.

\textbf{\textit{InFi}-Skip.}
We set a redundancy threshold for \textit{InFi}-Skip to determine whether to skip the current input.
And if we skip the input, \textit{InFi}-Skip will return a NONE result, whose interpretation depends on the redundancy measurement in specific applications.
For example, NONE means no face detected in face detection, zero vehicles in vehicle counting application, meaningless speech in speech recognition, etc.

\textbf{\textit{InFi}-Reuse.}
To reuse previous inference results, we need to maintain a cache whose entry is a key-value pair of an input embedding and its inference results.
Following the previous RESUE approach~\cite{foggycache}, we adopt K-Nearest Neighbors (KNN) algorithm to reuse cached results.
But it is possible that a new input is not similar to any cached entries, i.e., a cache miss.
We adopt the Homogenized KNN (H-KNN)~\cite{foggycache} algorithm to handle this problem, which calculates a homogeneity score $\theta$ of the found K nearest neighbors and sets a threshold $\theta_T$ on the homogeneity score to detect the cache miss.
Then we can replace entries using policies like least frequently used (LFU), denoted by \texttt{replace} in Algorithm~\ref{alg:infi}.
Different from the original KNN that typically uses Euclidean distance, which is non-parametric, we set the distance measurement as the trained $g_{cls}$.
We denote $\text{HKNN}(cache, emb, g_{cls}, K)$ as the H-KNN function which returns the majority inference result $y$ of $K$ nearest neighbors of $emb$ in $cache.keys$ using the $g_{cls}$ to calculate the distance between embeddings, and computes $\theta$.
We focus on taking the advantage of end-to-end learnability, and other subtle optimization opportunities such as cache warm-up are out of the scope of this work.

\subsection{Mobile-Centric Deployments}
\label{subsec:deployment}

Unlike existing work tailored for specific deployment, e.g., inference offloading~\cite{ff,foggycache,reducto}, \textit{InFi} supports diverse mobile-centric deployments:
(1) \textbf{On-device:} both inference model and input filter are deployed on one device;
(2) \textbf{Offloading:} the input filter is deployed on one device, and the inference model is deployed on another device.
(3) \textbf{Model Partitioning (MP)}~\cite{model-partition}: the inference model is partitioned across two devices, and the input filter is deployed with the first part.
MP is a promising approach to collaboratively make use of the computing resources of mobile and edge devices~\cite{code-partition,dnn-partition} and better protect the privacy of mobile data~\cite{partition-privacy}.
For MP deployment, the filter's input is the feature map, so existing filtering approaches~\cite{ff,foggycache,reducto} cannot be applied.
Due to the support of feature map modality, \textit{InFi} is the first input filter that can be applied in model partitioning workloads.
Note that \textit{InFi} is not limited to systems with a single mobile and edge node.
For example, training one filter per server, or changing one filter's binary classifier into a multi-category one (one bit per server), \textit{InFi}-Skip can be used in the multi-tenancy context~\cite{ff}.
\section{Evaluation}
\label{sec:exp}

\subsection{Implementation and Configurations}
\label{subsec:config}
We implemented \textit{InFi}~\footnote{https://github.com/yuanmu97/infi} in Python.
We build all feature networks and classifiers with TensorFlow 2.4.
The learning rate is set as 0.001, the batch size is 32, and the number of training epochs is 20.
In the text feature network, the output dimension of the embedding layer is 32.
For image, video, and audio feature networks, we use 32 and 64 convolution kernels in the two residual blocks.
We use 128 units in the first dense layer in vector feature networks.
The last dense layer of all feature networks has 200 units and 0.5 dropout probability.

\begin{table}[t]
\centering
\caption{Datasets and Inference Workloads}
\label{tab:workload}
\begin{tabular}{@{}lll@{}}
\toprule
\textbf{Dataset} & \textbf{Modality} & \textbf{Inference Task} \\ \midrule
\multirow{7}{*}{Hollywood2} & Video Clip & Action Classification (AC) \\ \cmidrule(l){2-3} 
 & \multirow{3}{*}{Image} & Face Detection (FD) \\
 &  & Pose Estimation (PE) \\
 &  & Gender Classification (GC) \\ \cmidrule(l){2-3} 
 & Audio & Speech Recognition (SR) \\ \cmidrule(l){2-3} 
 & \multirow{2}{*}{Text} & Named Entity Recognition (NER) \\
 &  & Sentiment Classification (SC) \\ \midrule
ESC-10 & Audio & Anomaly Detection (AD) \\ \midrule
UCI HAR & Motion Signal & Activity Recognition (HAR) \\ \midrule
MoCap & Motion Signal & User Identification (UI) \\ \midrule
DeepSig & Radio Signal & Modulation Recognition (MR) \\ \midrule
WiFiHAR & WiFi CSI & Activity Recognition (WAR) \\ \midrule
\multirow{2}{*}{City Traffic} & Video Stream & Vehicle Counting (VC) \\ \cmidrule(l){2-3} 
 & Feature Map & Vehicle Counting (VC-MP) \\ \bottomrule
\end{tabular}
\end{table}

\textbf{Datasets and inference models.}
To evaluate \textit{InFi}'s wide applicability, we choose 14 inference workloads that cover 8 input modalities and three deployments (see Tab.~\ref{tab:workload}).
Seven datasets are used:
(1) We reprocessed a standard video dataset, Hollywood2~\cite{hw2}, to create four different input modalities: video clip, image, audio, and text.
An action classification model~\cite{actionmodel} is deployed on the original video clips.
Images are sampled from the video clips and a face detection~\cite{deepface}, a pose estimation~\cite{openpose}, and a gender classification~\cite{deepface} models are deployed.
Audio is extracted from each video clip and we deploy a speech recognition model~\cite{deepspeech2}.
Text is the caption generated on sampled images by an image captioning model~\cite{caption}.
A named entity recognition model (spaCy~\cite{spacy}) and a sentiment classification model~\cite{sentiment-model} are deployed.
(2) We use the ESC-10 dataset~\cite{esc} for audio anomaly detection and deploy an transformer-based model~\cite{ast-model}.
(3) We use the UCI HAR dataset~\cite{ucihar} for motion signal-based human activity recognition and deploy an LSTM-based model.
(4) We use the MoCap dataset~\cite{mocap} for training a motion signal-based user identification (12 users) model, using an LSTM-based architecture, and deploying it as the inference workload.
(5) We use the DeepSig dataset~\cite{dl-radio-clf} and deploy a ResNet-based model for modulation recognition of radio signals.
(6) We use the WiFiHAR dataset~\cite{wifi-dataset} for activity recognition and deploy an LSTM-based model.
(7) We collected a video dataset, named City Traffic, from a real city-scale video analytics platform.
We collected 48 hours of videos (1FPS) from 10 cameras at road intersections and deploy YOLOv3 re-implemented with TensorFlow 2.0 to count the number of vehicles in video frames.
All deployed inference models load publicly released pre-trained weights.
And we split each dataset for training and testing by 1:1 (Hollywood2 and UCI HAR are split randomly, while City Traffic is split by time on each camera).

\textbf{Devices and deployments.}
We use an edge server with one NVIDIA 2080Ti GPU and three mobile platforms: (1) NVIDIA JETSON TX2, (2) XIAOMI Mi 5, and (3) HUAWEI WATCH.
All device-independent metrics are tested on the edge.
For vehicle counting, we test three deployments: on-device, offloading, and model partitioning (see Sec.~\ref{subsec:deployment}).

\textbf{Baselines.}
We adopt three strong baselines: FilterForward (FF)~\cite{ff}, Reducto~\cite{reducto}, and FoggyCache (FC)~\cite{foggycache}.
See Sec.~\ref{subsec:sub-instance} for details of baselines.
For workloads with no existing method presented (to our best knowledge), we tested a method dubbed \textit{Low-level} that first computes low-level embedding for inputs (MFCC for audio, Bag-of-Words for text, raw data for motion signal and feature map).
Then \textit{Low-level} uses K-nearest neighbors vote (K=10) for both SKIP and REUSE cases.
We also deployed YOLOv3-tiny~\cite{yolov3-tiny} model for vehicle counting and a lightweight pose estimation model~\cite{light-openpose} to compare input filtering and model compression techniques.

\begin{table}[t]
\centering
\caption{Filtering rate (\%) @ 90\% inference accuracy of SKIP methods.}
\label{tab:skip-90acc}
\begin{tabular}{@{}llllllll@{}}
\toprule
Method & FD & PE & GC & AC & VC & AD & WAR \\ \midrule
FF & 0 & 14.5 & 0.0 & 0.0 & 48.0 & / & /\\
Reducto & / & / & / & / & 48.6 & / & /\\
\textit{InFi}-Skip & 36.1 & 18.9 & 33.1 & 56.0 & 66.5 & 75.4 & 11.6\\
Optimal & 64.8 & 34.4 & 71.8 & 91.2 & 77.7 & 86.8 & 31.1\\ \bottomrule
\toprule
Method & SR & NER & HAR & UI & SC & VC-MP & MR \\ \midrule
\textit{InFi}-Skip & 44.1 & 26.8 & 91.2 & 72.4 & 22.5 & 70.7 & 40.9\\
Optimal & 59.9 & 34.4 & 91.8 & 79.8 & 63.8 & 77.7 & 59.9 \\ \bottomrule
\end{tabular}
\end{table}

\begin{table}[t]
\centering
\caption{Filtering rate @ 90\% inference accuracy of REUSE methods.}
\label{tab:reuse-90acc}
\begin{tabular}{@{}lllllll@{}}
\toprule
Method & GC & AC & HAR & SC & VC-MP & VC \\ \midrule
FC & 66.1\% & 13.2\% & / & / & / & 59.4\% \\
\textit{InFi}-Reuse & \textbf{98.8\%} & \textbf{32.1\%} & \textbf{98.3\%}  & \textbf{43.4\%} & \textbf{95.0\%} & \textbf{91.1\%} \\ \bottomrule
\end{tabular}
\end{table}

\subsection{Inference Accuracy vs. Filtering Rate}
\label{subsec:acc-r}

First, we test two device-independent metrics (inference accuracy and filtering rate) on the ten inference workloads.
We adjust the confidence threshold in FF, Reducto, and \textit{InFi}-Skip, and the ratio of cached inputs in FC and \textit{InFi}-Reuse, from 0 to 1 with 0.01 interval. 

\textbf{Redundancy measurements.}
(1) SKIP:
For FD (PE), outputs with no detected face (person keypoints) are redundant.
For GC (SC), outputs with classification confidence less than a threshold, CONF (0.9), are redundant.
For AC, outputs that are not in a subset of classes, Sub, are redundant.
For SR, outputs with the number of recognized words less than a threshold, N, are redundant.
For NER, outputs without entity label ``PERSON'' are redundant.
For HAR, outputs that are not ``LAYING'' are redundant.
For UI, outputs that do not belong to the first 6 users are redundant.
For AD, outputs that are not in \{``Cry, Sneeze, Firing''\} (anomaly events) are redundant.
For MR, we randomly select half of the radio modulation types as redundant.
For WAR, outputs with ``NO PERSON'' are redundant.
For VC and VC-MP, outputs with zero count are redundant.
(2) REUSE:
Experimental results show that cache miss happens rarely, so the homogeneity threshold is set as 0.5.
We regard inputs that hit the cache as redundant.
For the VC (-MP), since we have 86K images from each camera, a fixed cache ratio can lead to serious inefficiency in the KNN algorithm.
We fix the cache size as 1000 and reinitialize the cache every 5000 frames.
For other inference workloads, we set a fixed cache size according to the cache ratio.

\textbf{Overview of results.}
Tab.~\ref{tab:skip-90acc} and Tab.~\ref{tab:reuse-90acc} summarize the results of the SKIP and REUSE methods.
Following related work~\cite{reducto}, we report the filtering rates at 90\% inference accuracy.
The optimal results are computed by (1-0.9)+$r_N$ where $r_N$ denotes the ratio of redundant inputs in the test dataset.
Results show that \textit{InFi}-Skip outperforms FF and Reducto on all 10 workloads with significantly higher filtering rates and wider applicability. Similarly,  \textit{InFi}-Reuse significantly outperforms FC on all 6 applicable workloads.
\textit{InFi}-Skip can filters 18.9\%-91.2\% inputs and \textit{InFi}-Reuse can filters 32.1\%-98.8\% inputs, while keeping more than 90\% inference accuracy.
For all workloads, \textit{Low-level} method cannot achieve 90\% inference accuracy unless no input is filtered (i.e. 0.0\% filtering rate), and we omit these results in the tables.

\begin{figure}[t]
     \centering
     \begin{subfigure}[b]{0.49\linewidth}
         \centering
         \includegraphics[width=\linewidth]{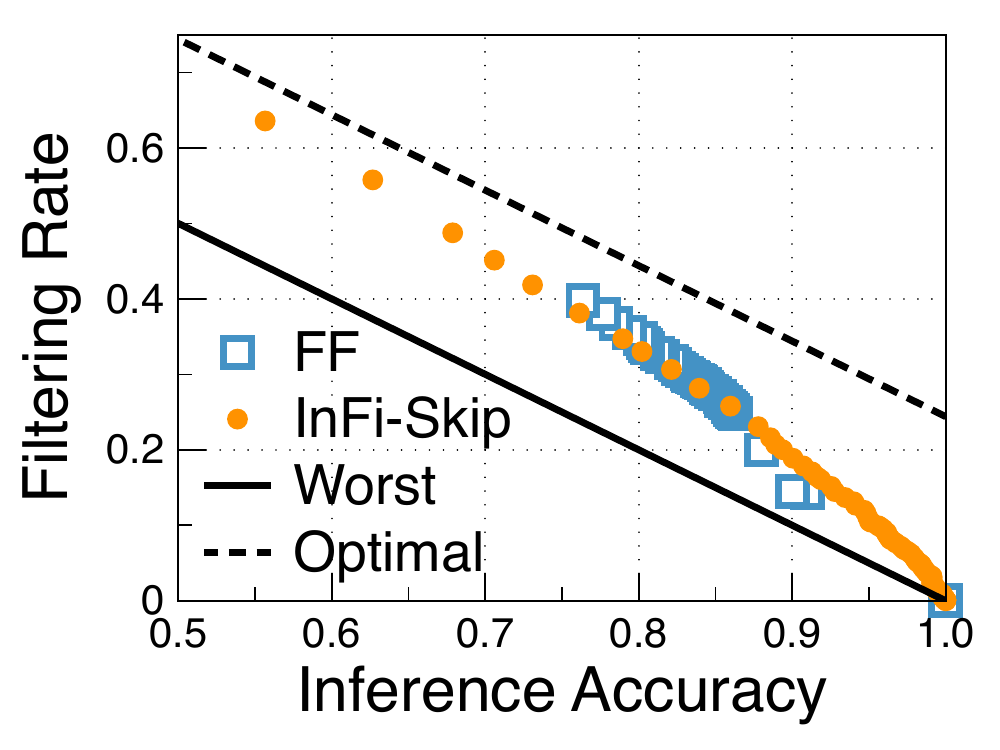}
         \caption{Pose Estimation}
         \label{fig:pd}
     \end{subfigure}
     \hfill
     \begin{subfigure}[b]{0.49\linewidth}
         \centering
         \includegraphics[width=\linewidth]{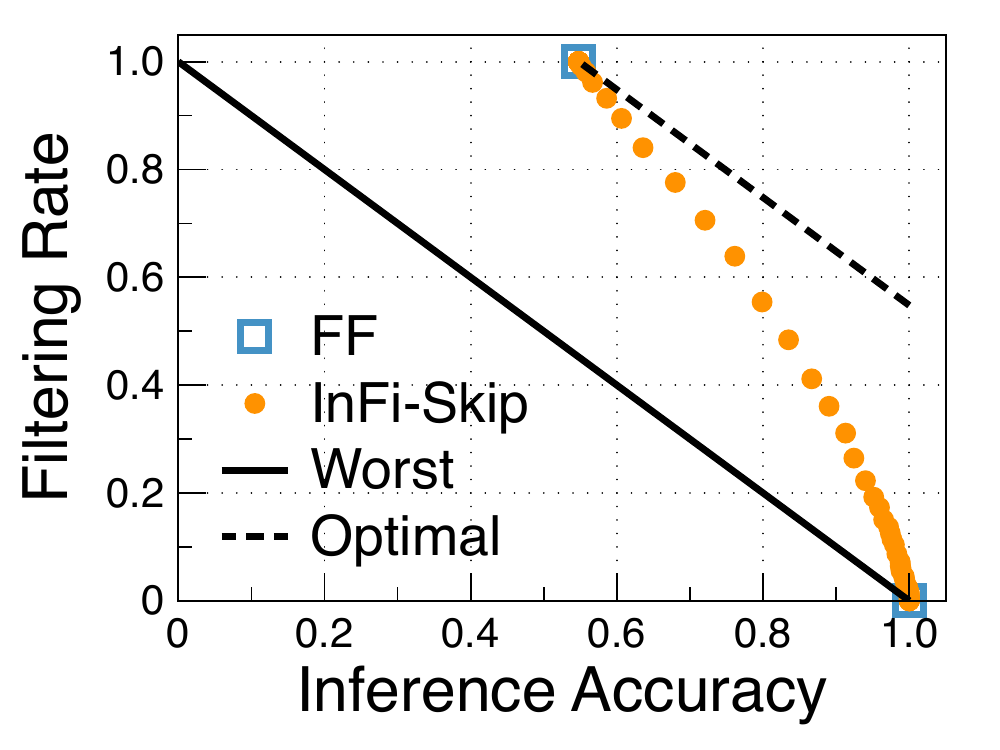}
         \caption{Face Detection}
         \label{fig:fd}
     \end{subfigure}
\caption{Comparison between FF and \textit{InFi}-Skip filters on visual detection workloads.}
\label{fig:detection}
\end{figure}

\begin{figure}[t]
     \centering
     \begin{subfigure}[b]{0.49\linewidth}
         \centering
         \includegraphics[width=\linewidth]{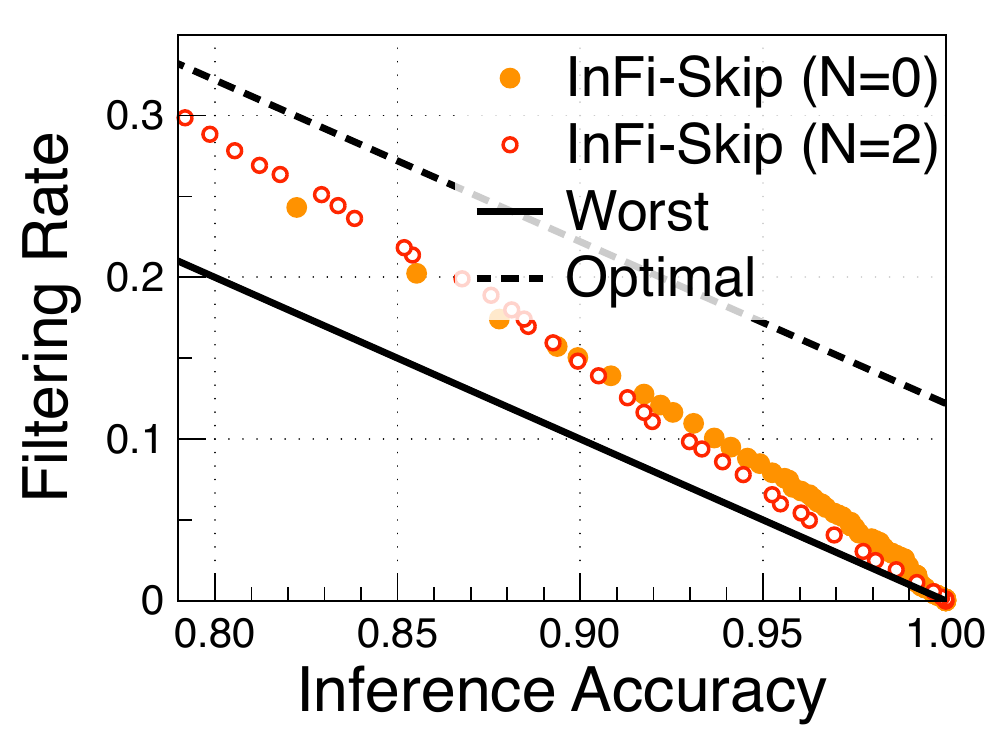}
         \caption{SR (N=0)}
         \label{fig:sr-n0}
     \end{subfigure}
     \hfill
     \begin{subfigure}[b]{0.49\linewidth}
         \centering
         \includegraphics[width=\linewidth]{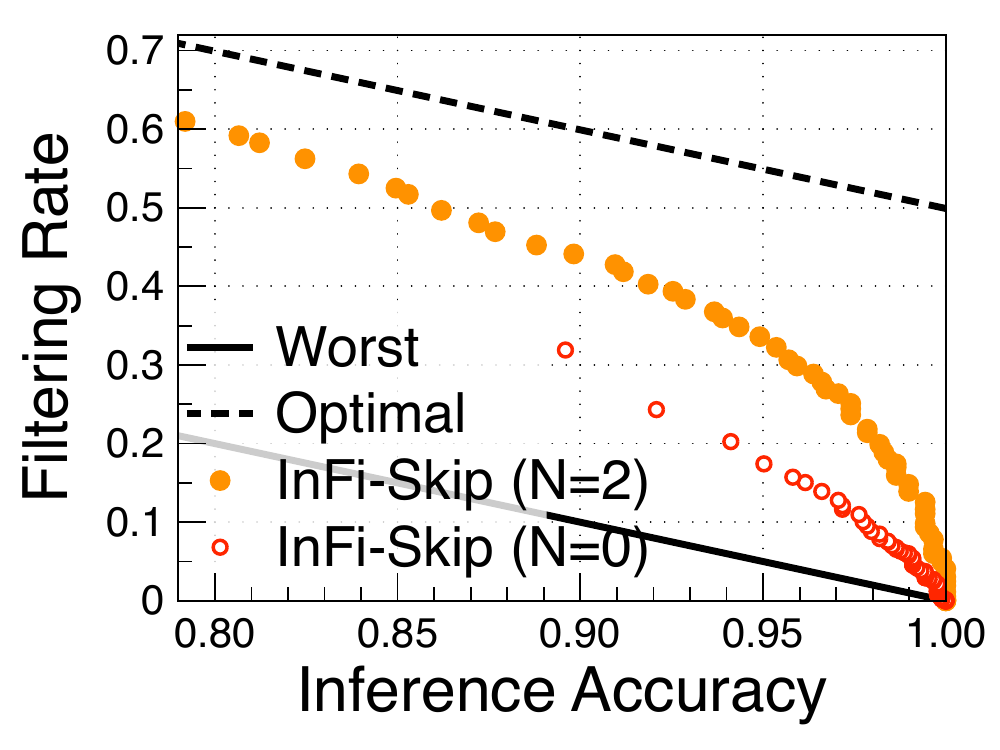}
         \caption{SR (N=2)}
         \label{fig:sr-n2}
     \end{subfigure}
\caption{\textit{InFi}-Skip filters on speech recognition workloads. N is the minimal number of recognized words.}
\label{fig:skip-sr}
\end{figure}

\begin{figure}[t]
     \centering
     \begin{subfigure}[b]{0.49\linewidth}
         \centering
         \includegraphics[width=\linewidth]{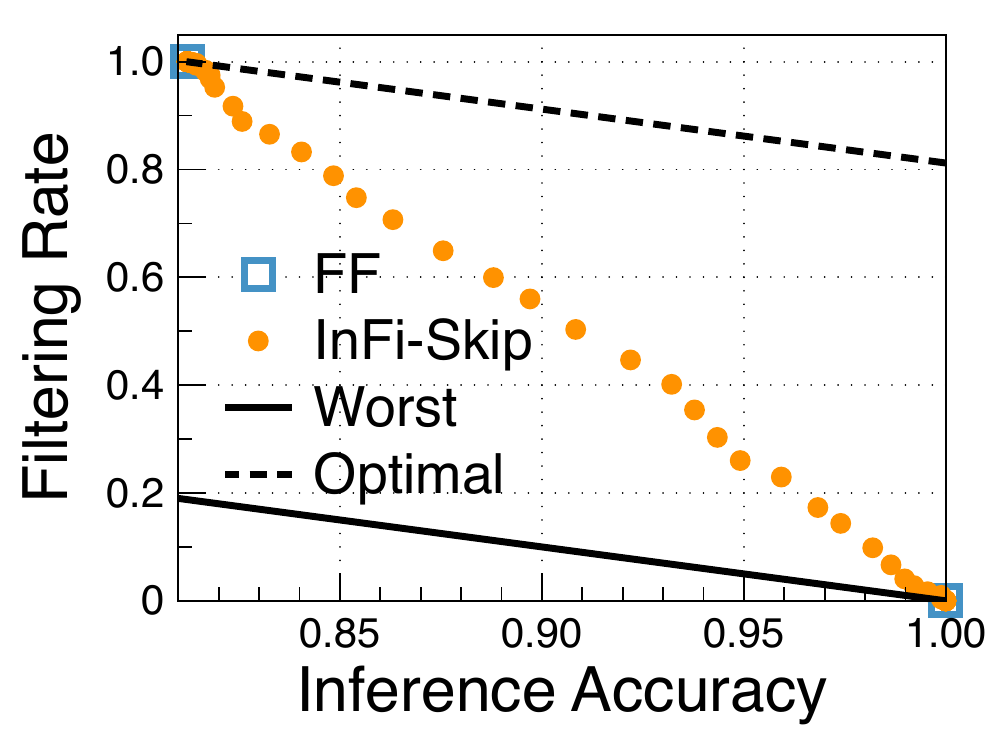}
         \caption{AC (2 Classes)}
         \label{fig:ac-2c}
     \end{subfigure}
     \hfill
     \begin{subfigure}[b]{0.49\linewidth}
         \centering
         \includegraphics[width=\linewidth]{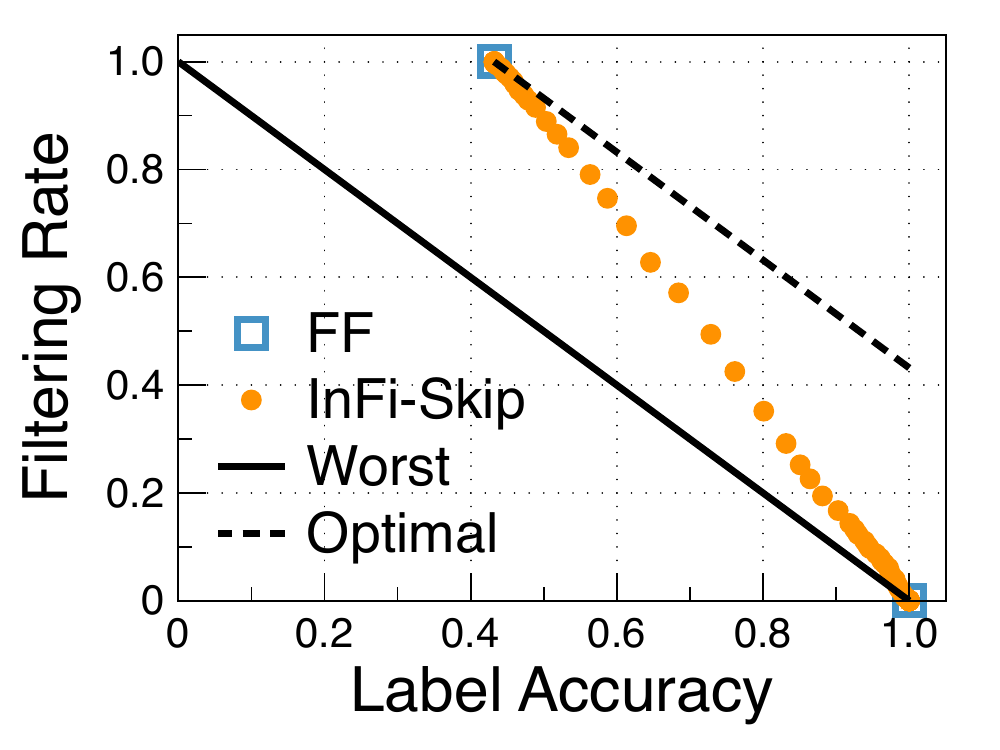}
         \caption{AC (8 Classes)}
         \label{fig:ac-8c}
     \end{subfigure}
\caption{\textit{InFi}-Skip filters on action classification workloads with 2/8 selected classes in the subset.}
\label{fig:skip-ac}
\end{figure}

\textbf{Feature discriminability.}
By comparing FF and \textit{InFi} on FD, PE, GC, and AC workloads, we evaluate the discriminability of our end-to-end learned features.
As shown in Fig.~\ref{fig:detection}, FF works on the pose estimation workload, but not on the face detection workload.
The ``Worst'' case is calculated by $r=1-Acc$.
The reason may be that there is a ``person'' label in the ImageNet dataset, so the pre-trained feature embedding in FF is discriminative for determining whether there is a human pose.
However, on other tasks (e.g., FD, GC, and AC), the pre-trained features are not discriminative and FF can only provide two extreme filtering policies: either filtering all input or filtering nothing, which is useless in practice.
On the contrary, \textit{InFi}-Skip learns feature embedding with robust discriminability and performs well on all four workloads.
With over 90\% inference accuracy, \textit{InFi}-Skip can filter 18.9\% and 36.1\% inputs for PE and FD workloads, respectively.

\textbf{Transferability.}
One interesting question is, how transferable is the trained filter to workloads with a looser or tighter redundancy measurement?
We set the minimal number of recognized words, N, as 0 and 2 and train two \textit{InFi}-Skip filters.
Then we test the two filters on two test sets with different N.
As shown in Fig.~\ref{fig:skip-sr}, the performance of \textit{InFi}-Skip (N=2) is close to \textit{InFi}-Skip (N=0) when tested with N=0, however, the performance of \textit{InFi}-Skip (N=0) is apparently worse when tested with N=2.
An intuitive explanation is that the learned feature with a looser redundancy measurement covers the one with a tighter redundancy measurement, while the opposite is not true.

\textbf{Sensitivity to class subset size.}
For the class-subset redundancy measurement, we set different subset sizes to test the sensitivity.
As shown in Fig.~\ref{fig:skip-ac}, for action classification workload, setting a smaller class subset brings more redundant samples.
And \textit{InFi}-Skip robustly provides smooth accuracy-efficiency trade-off curves in both cases, which significantly outperforms FF (only two extreme points are provided).

\begin{figure}[t]
     \centering
     \begin{subfigure}[b]{0.49\linewidth}
         \centering
         \includegraphics[width=\linewidth]{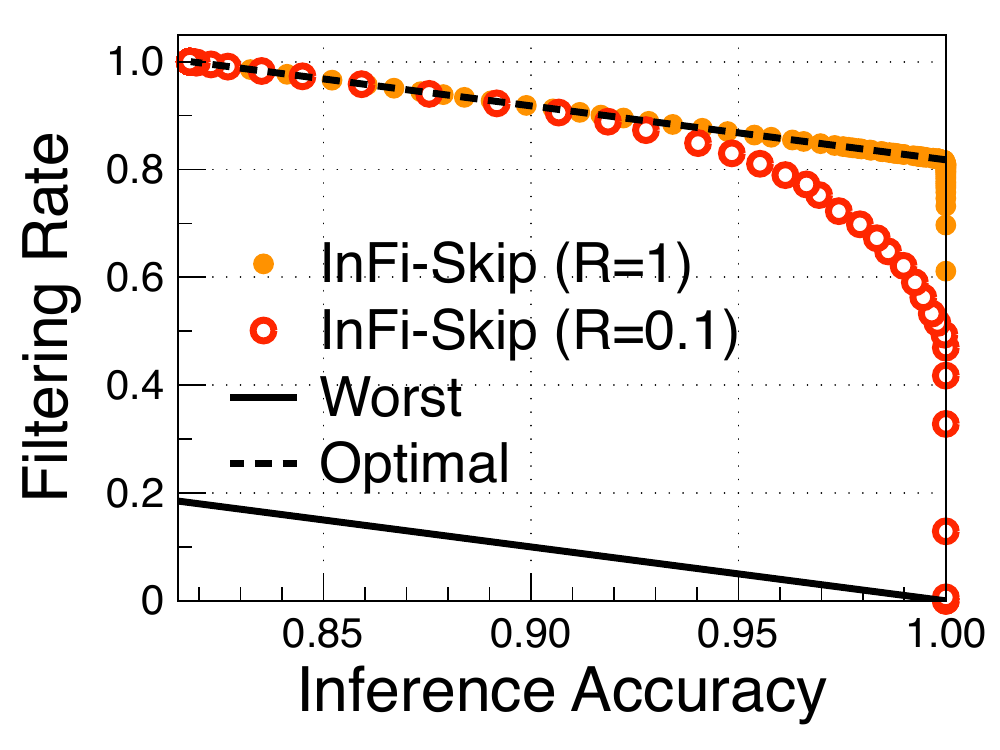}
         \caption{HAR (L=LAYING)}
         \label{fig:har-skip}
     \end{subfigure}
     \hfill
     \begin{subfigure}[b]{0.49\linewidth}
         \centering
         \includegraphics[width=\linewidth]{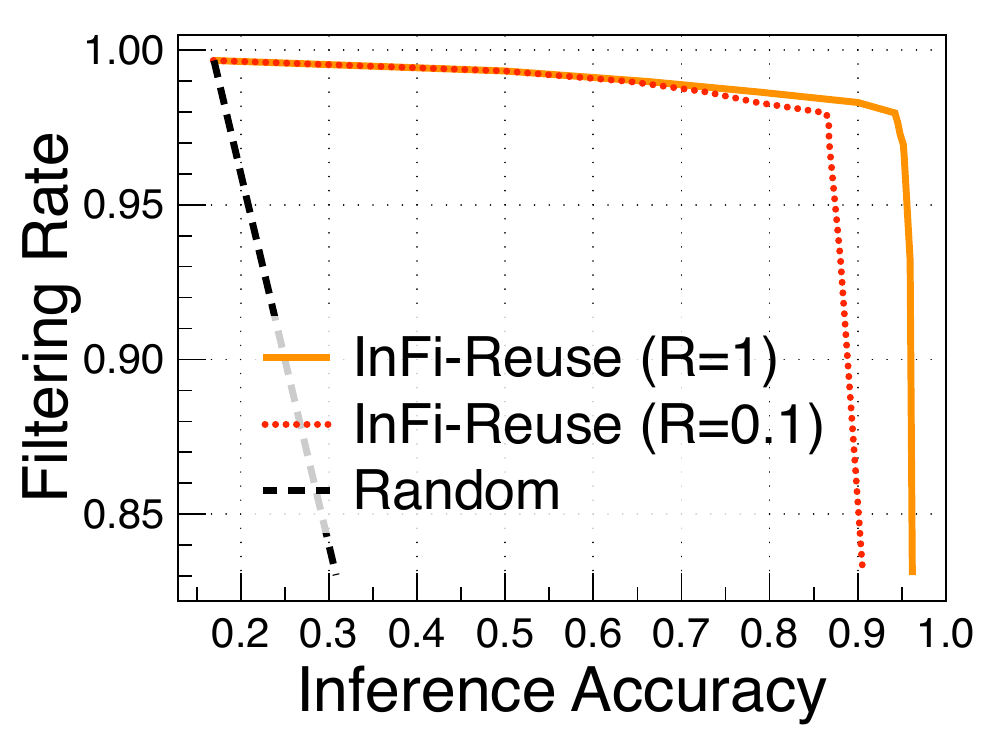}
         \caption{HAR Reuse}
         \label{fig:har-reuse}
     \end{subfigure}
\caption{\textit{InFi} filters on HAR inference workloads. R denotes the ratio of training samples used. The ``Random'' case labels each input randomly.}
\label{fig:har}
\end{figure}

\begin{figure}[t]
     \centering
     \begin{subfigure}[b]{0.49\linewidth}
         \centering
         \includegraphics[width=\linewidth]{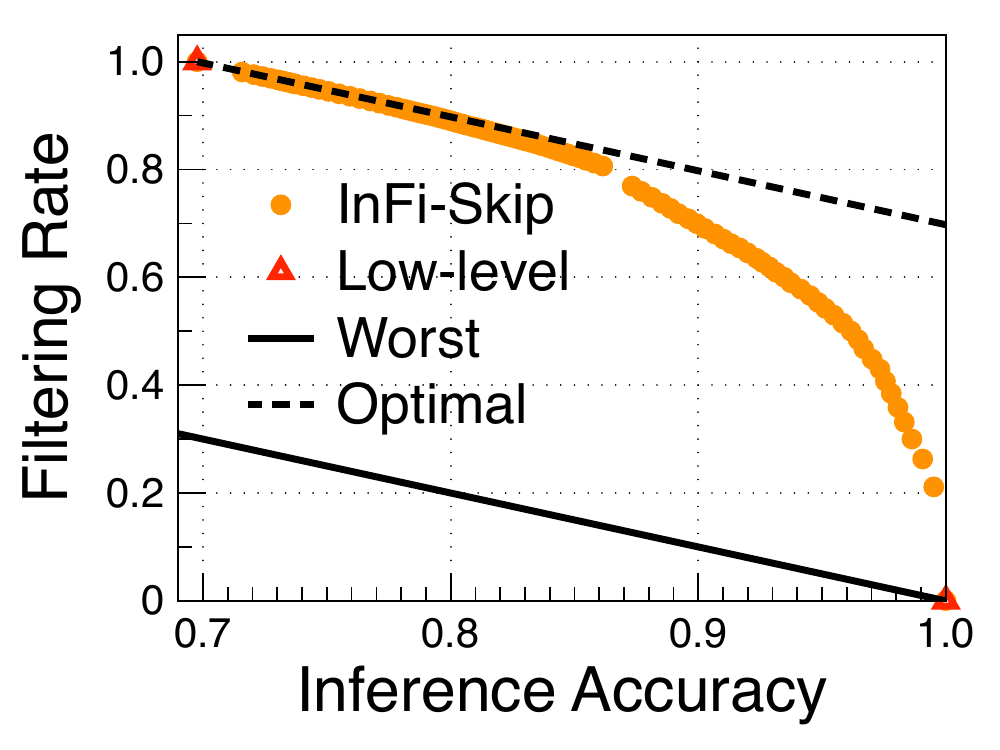}
         \caption{NDense=200, EmbLen=128}
         \label{fig:ui-skip}
     \end{subfigure}
     \hfill
     \begin{subfigure}[b]{0.49\linewidth}
         \centering
         \includegraphics[width=\linewidth]{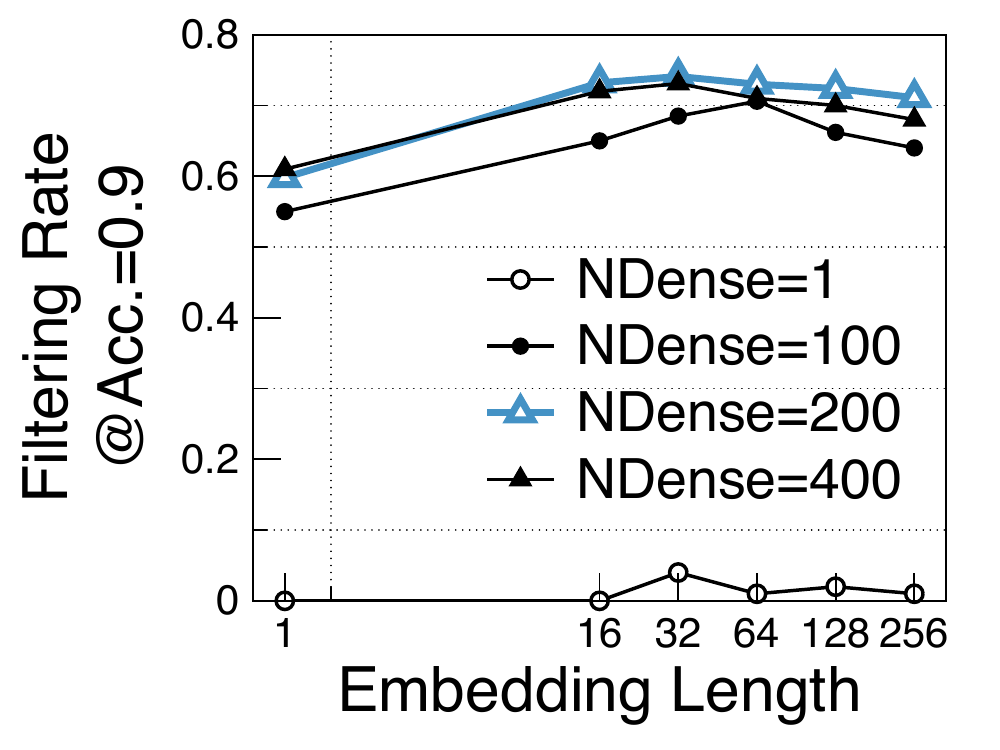}
         \caption{Model Complexity}
         \label{fig:ui-complex}
     \end{subfigure}
\caption{\textit{InFi}-Skip on UI workload. NDense is the number of dense units. EmbLen is the length of embedding.}
\label{fig:ui}
\end{figure}

\textbf{Sensitivity to training size.}
We further divide training splits into sets with different sizes.
As shown in Fig.~\ref{fig:har}, using only 10\% samples from the training set, \textit{InFi} can still achieve near-optimal performance on HAR workload.
Let R denote the ratio of training samples used for training.
When achieving over 95\% inference accuracy, \textit{InFi}-Skip (R=1) filters 86.4\% inputs while \textit{InFi}-Skip (R=0.1) still filters 81.1\%.
For high-accuracy reuse, the impact of training size is relatively greater.
When filtering 90\% inputs, \textit{InFi}-Reuse (R=1) can achieve 95.9\% inference accuracy, while the accuracy of \textit{InFi}-Reuse (R=0.1) decreases to 88.1\%.

\textbf{Sensitivity to model complexity.}
To explore the relationship between the complexity and performance of input filters, we trained \textit{InFi}-Skip filters for the UI workload using the different lengths of embedding (1, 16, 32, 64, 128, 256) and the number of dense units (1, 100, 200, 400) in the classifier.
And we measure the performance by the maximum filtering rate when achieving 90\% inference accuracy.
As shown in Fig.~\ref{fig:ui-complex}, except for extreme cases (e.g., single dense or embedding unit), the filtering performance is relatively robust.

\textbf{Sensitivity to K in KNN.}
The parameter K in KNN affects the classification accuracy.
We vary K from 1 to 20 and test the REUSE filters' performance.
As shown in Fig.~\ref{fig:classify}, on GC workload, \textit{InFi}-Reuse is robust to varied K parameters, while FC suffers serious performance degradation.
For example, with 90\% inference accuracy, FC (K=5) can filter 68.4\% inputs, while FC (K=1) can only filter 27.3\% which is slightly higher than the random guess (20\%).
On the contrary, \textit{InFi}-Reuse (K=1,5) can all achieve a 94.3\% filtering rate with more than 95\% inference accuracy.
For the AC workload, the results show that the handcrafted feature SIFT is not discriminative, and all tested K parameters lead to similar performance with random labeling.
\textit{InFi}-Reuse can learn an action-related discriminative feature, it can filter 18.6\% inputs and keep more than 90\% inference accuracy (K=10).

\begin{figure}
     \centering
     \begin{subfigure}[b]{0.49\linewidth}
         \centering
         \includegraphics[width=\linewidth]{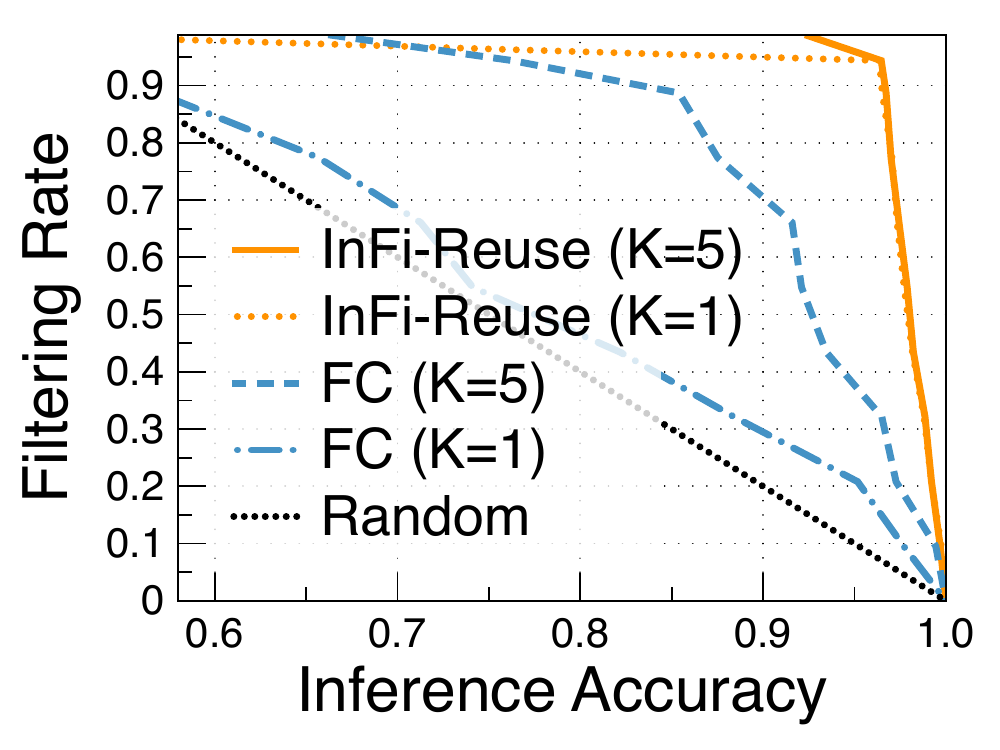}
         \caption{Gender Classification}
         \label{fig:gc-reuse}
     \end{subfigure}
     \hfill
     \begin{subfigure}[b]{0.49\linewidth}
         \centering
         \includegraphics[width=\linewidth]{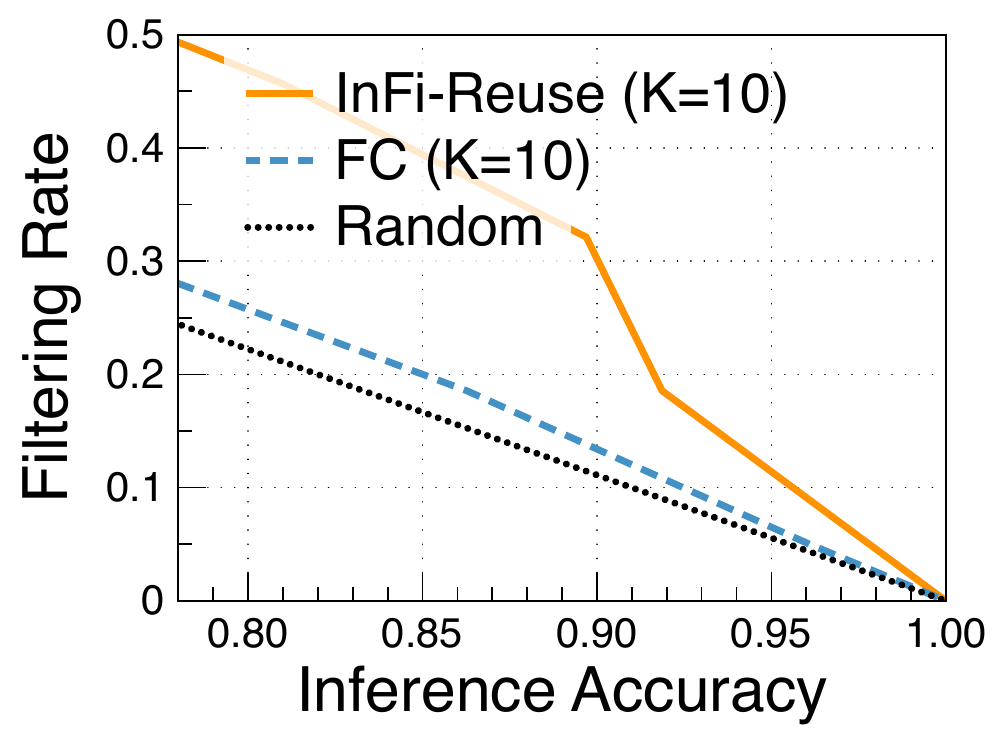}
         \caption{Action Classification}
         \label{fig:ac-reuse}
     \end{subfigure}
\caption{Comparison of FC and \textit{InFi}-Reuse on visual classification workloads. K is the parameter in KNN.}
\label{fig:classify}
\end{figure}

\begin{figure}[t]
     \centering
     \begin{subfigure}[b]{0.49\linewidth}
         \centering
         \includegraphics[width=\linewidth]{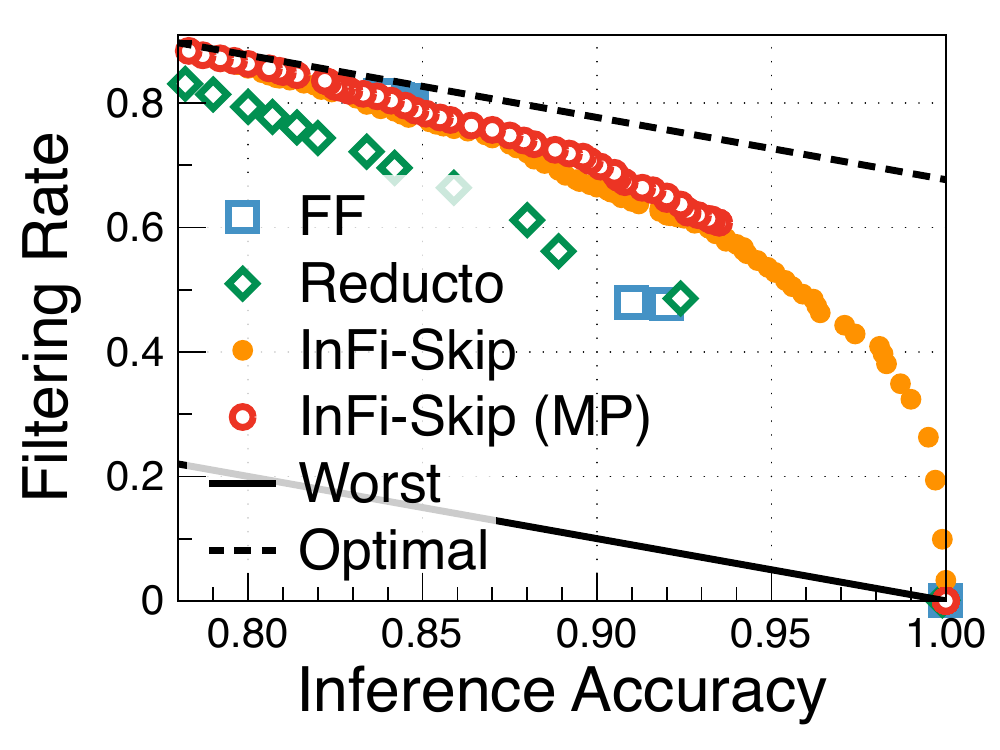}
         \caption{SKIP Methods}
         \label{fig:vc-skip}
     \end{subfigure}
    \hfill
    \begin{subfigure}[b]{0.49\linewidth}
         \centering
         \includegraphics[width=\linewidth]{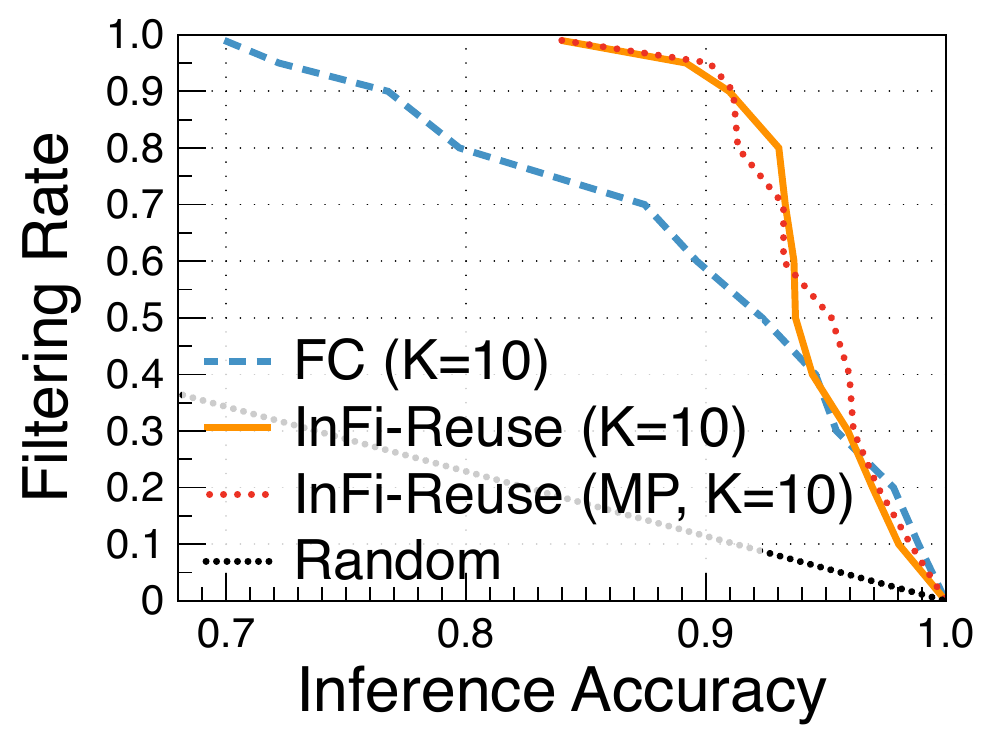}
         \caption{REUSE Methods}
         \label{fig:vc-reuse}
     \end{subfigure}
\caption{Comparisons of filters on VC and VC-MP.}
\label{fig:vc}
\end{figure}

\begin{figure}[t]
     \centering
     \begin{subfigure}[b]{0.49\linewidth}
         \centering
         \includegraphics[width=\linewidth]{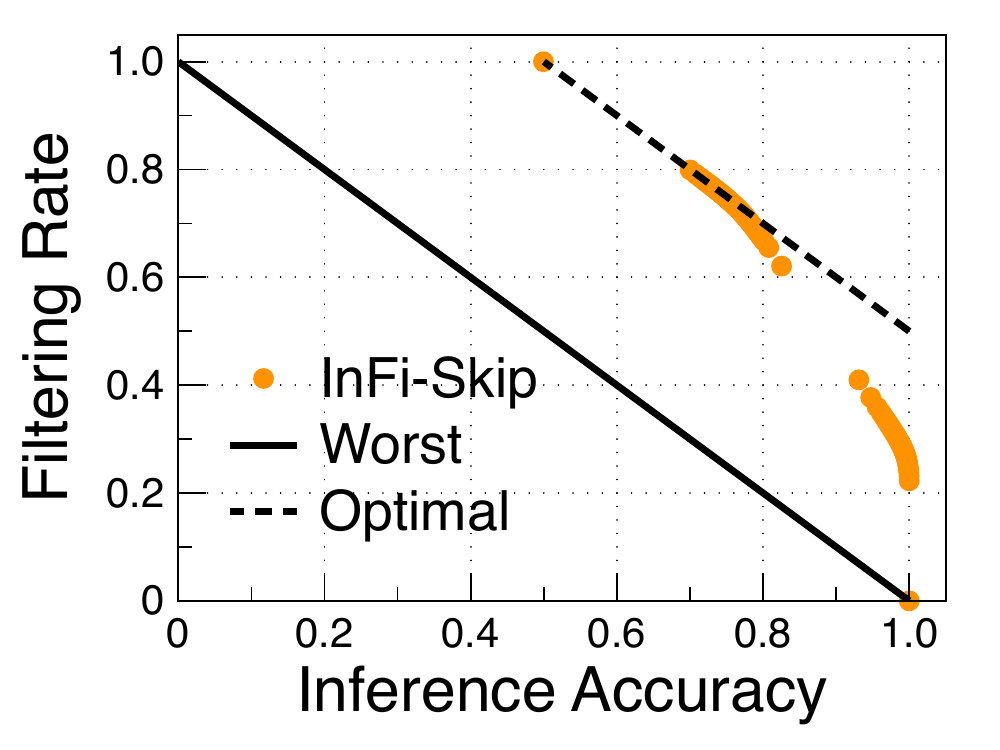}
         \caption{Modulation Recognition}
         \label{fig:mr}
     \end{subfigure}
    \hfill
    \begin{subfigure}[b]{0.49\linewidth}
         \centering
         \includegraphics[width=\linewidth]{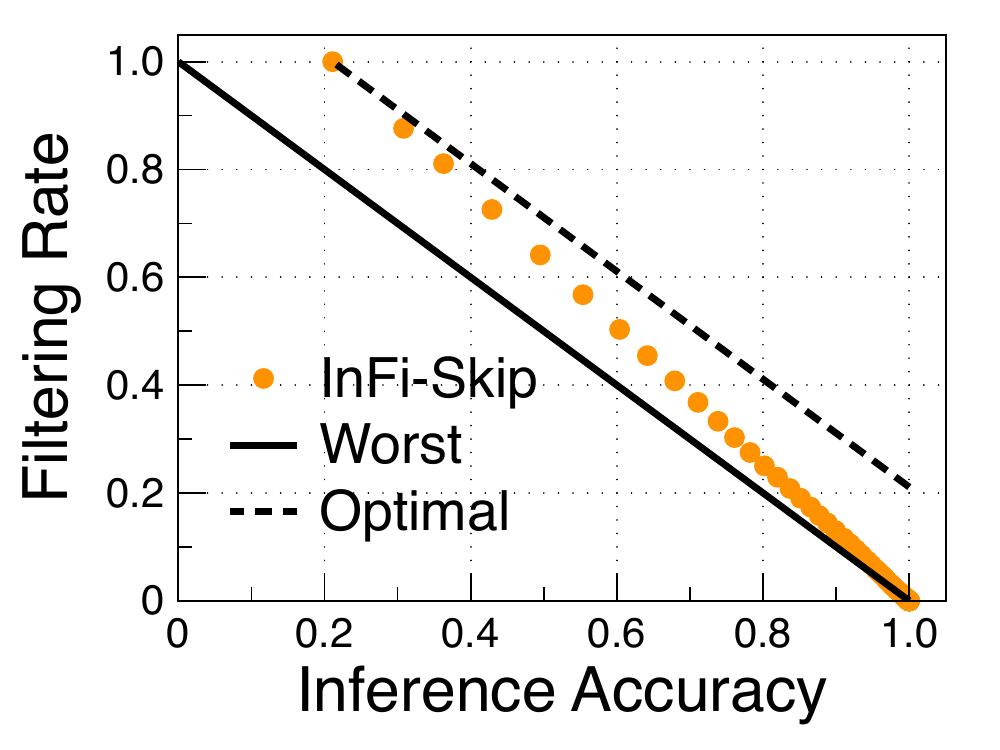}
         \caption{WiFi Action Recognition}
         \label{fig:wifi-har}
     \end{subfigure}
\caption{\textit{InFi}-Skip on on MR and WAR workloads.}
\label{fig:mr-war}
\end{figure}

\begin{figure}[t]
     \centering
     \begin{subfigure}[b]{0.49\linewidth}
         \centering
         \includegraphics[width=\linewidth]{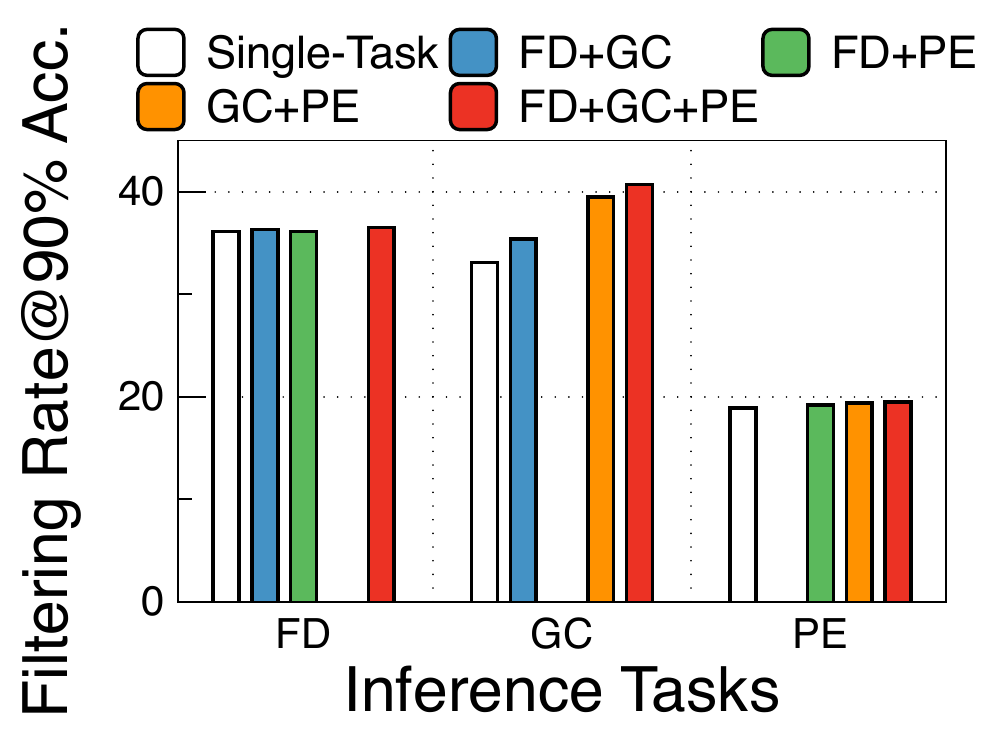}
         \caption{Single-Modality Multi-Task}
         \label{fig:smmt}
     \end{subfigure}
     \hfill
    \begin{subfigure}[b]{0.49\linewidth}
         \centering
         \includegraphics[width=\linewidth]{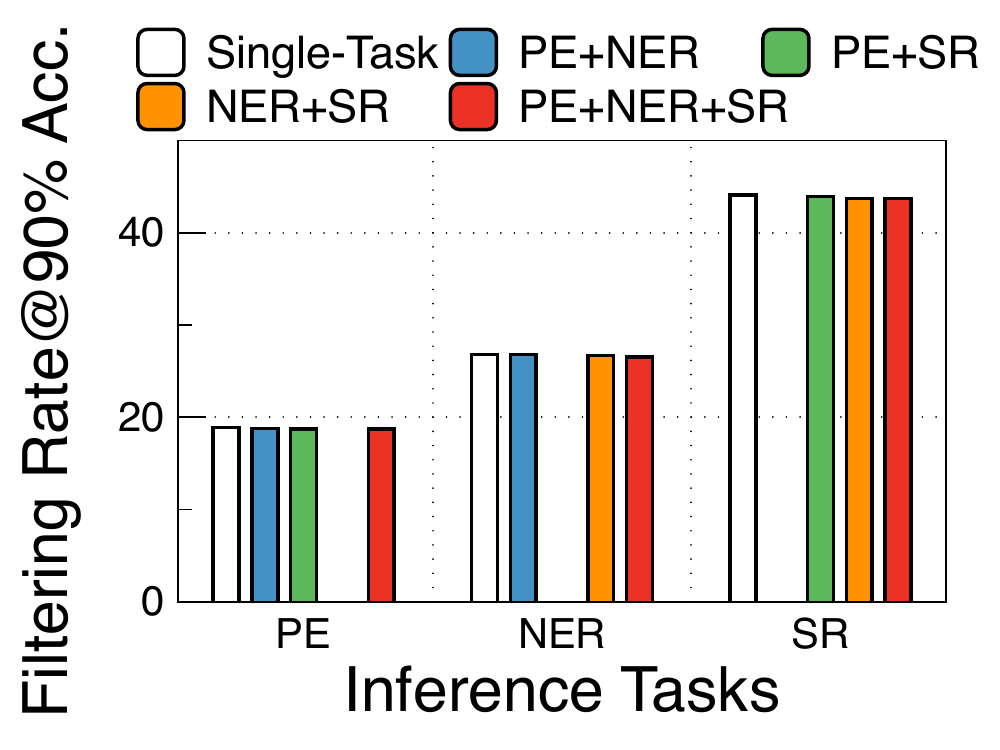}
         \caption{Multi-Modality Multi-Task}
         \label{fig:mmmt}
     \end{subfigure}
\caption{Comparison of single-task and multi-task \textit{InFi}-Skip. The plus sign denotes joint training using multiple tasks.}
\label{fig:mtl-infi}
\end{figure}

\begin{figure}[t]
    \centering
    \includegraphics[width=0.95\linewidth]{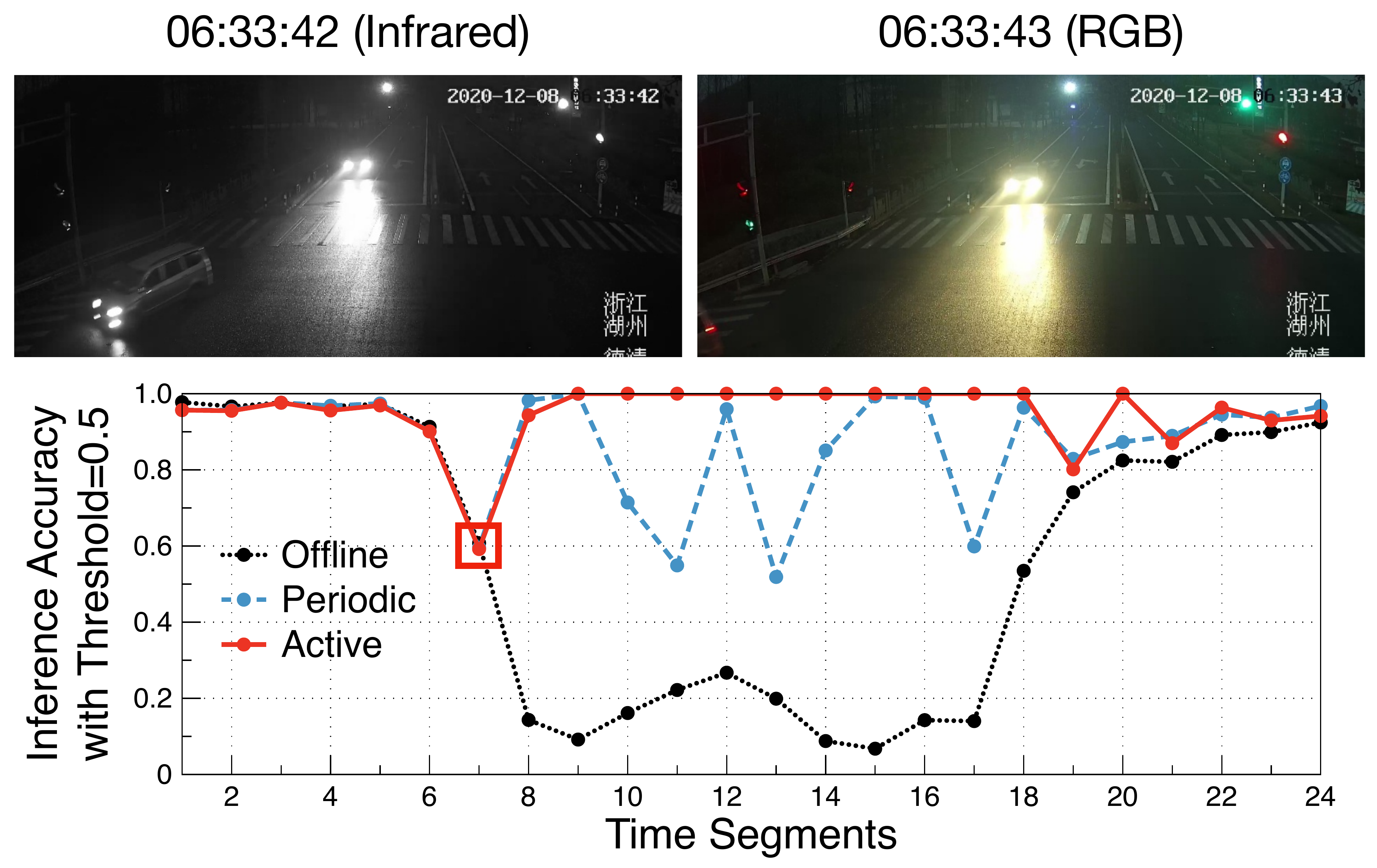}
    \caption{Active update of \textit{InFi}-Skip on VC workload. The two successive frames, one infrared and one RGB, corresponds to the boxed time segment.}
    \label{fig:online-active}
\end{figure}

\textbf{Comparisons on VC(-MP) workloads.}
Unlike other datasets, the video frames arrive in time order rather than randomly.
For VC-MP, we partition the YOLOv3 model to mobile-side (the first 39 layers) and edge-side (the rest layers).
As shown in Fig.~\ref{fig:vc}, \textit{InFi} outperforms FF, Reducto, and FC, and also is the only applicable method for the VC-MP workload.
With over 90\% inference accuracy, \textit{InFi}-Skip achieves 66.5\% filtering rate, while FF and Reducto achieve 48.0\% and 48.6\%, respectively;
\textit{InFi}-Reuse filters 31.7\% more inputs than FC when K=10.
The results show the superiority of end-to-end learned features over handcrafted and pre-trained ones.

\textbf{Mobile-featured modalities.}
Radio signals and WiFi CSI (channel side information) are featured modalities for mobile AI applications.
Fig.~\ref{fig:mr-war} shows the experimental results of applying \textit{InFi} on two mobile-featured tasks: radio modulation recognition and WiFi CSI-based action recognition.
Note that, existing methods cannot be applied to both tasks.
With 90\% target accuracy, \textit{InFi}-Skip saves 40.9\% and 11.6\% computations for MR and WAR workloads, respectively.

\textbf{Multi-task extension.}
In Sec.~\ref{subsec:multitask}, we present how to extend \textit{InFi} to multi-task workloads.
We use the Hollywood2 dataset and corresponding inference tasks to evaluate the multi-task extension of \textit{InFi}-Skip.
First, we select three inference tasks on image modality: FD, GC, and PE.
We build \textit{InFi}-Skip filters using a single task, two tasks, and three tasks and evaluate their performance on each task.
As shown in Fig.~\ref{fig:smmt}, the multi-task filter outperforms single-task ones on all three tasks, improving the filtering rate up to 7.6\% (for the GC task) when achieving 90\% inference accuracy.
Next, we select three inference tasks on different modalities: PE on images, NER on texts, and SR on audio.
And we build \textit{InFi}-Skip filters using a single modality, two modalities, and three modalities and evaluate them.
As shown in Fig.~\ref{fig:mmmt}, fusing these multi-modality tasks into one filter results in a slight decrease in the filtering rate.
But note that the overall efficiency is improved due to the shared parameters among different tasks.

\textbf{Online active update.}
To evaluate the active strategy for online adaptation of \textit{InFi}, we select the VC workload and compare three training methods:
(1) \textit{Offline}: selects the first 10\% frames of a day to train;
(2) \textit{Periodic}: selects the first 10\% frames of each hour to train and update;
(3) \textit{Active}: see Sec.~\ref{subsec:training}.
For a fair comparison, we set the same threshold (0.5) for the three methods.
As shown in Fig.~\ref{fig:online-active}, our proposed active strategy significantly improves the online adaptability of \textit{InFi}-Skip.
The offline policy's performance seriously degrades when the input distribution changes, mainly because the frames change from infrared to RGB images.
On average, the offline policy achieves only 56.4\% inference accuracy.
The periodic update policy alleviates this problem to some extent, improving the average accuracy to 87.0\%, but still suffers from performance fluctuations.
The active strategy's performance only drops at the 7th time segment, as it does not see any RGB images before that.
And the active strategy effectively selects informative samples to fit the new distribution and performs accurate filtering robustly.
On average, our active policy achieves 94.8\% inference accuracy, which is 38.4\% higher than the offline policy.

\subsection{Filterability}
\label{subsec:exp-filterable}

In Sec.~\ref{sec:filterability}, we compare the hypothesis complexity of the inference and filter models.
Let ``Conf.$>$T'' denote the low-confidence classification case ($\S$~\ref{subsec:conf}), ``Class Subset'' denote the redundant class subset case ($\S$~\ref{subsec:subset}), and ``Reg.$>$T'' denote the bounded regression case ($\S$~\ref{subsec:regression}).
GC and SC belong to the ``Conf.$>$T'' case, where T is 0.9.
AC, NER, and HAR belong to the ``Class Subset'' case, where AC selects 2 action labels, NER selects the ``PERSON'' label, and HAR selects the ``LAYING'' label.
FD, PE, and VC(MP) belong to the `Reg.$>$T'' case, where T is 0.
SR is a sequence-to-sequence model, which cannot perfectly fit any of these three cases.
We compute the ratio of the resulting filtering rate to the optimal filtering rate at 90\% inference accuracy to compare the filterability of different cases.
From a practical perspective, we evaluate the overall throughput with and without \textit{InFi}-Skip filters.
As shown in Fig.~\ref{fig:filterable}, the ``Conf.$>$T'' case in which we proved that the filter's complexity is not less than the inference model achieves obviously lower filtering ratio (0.41 median), while other cases in which we proved that the filter tends to be less complex achieve apparently higher ratios (0.71/0.78 medians).
On the other hand, the overall throughput improved by \textit{InFi}-Skip filters on filterable cases is more significant than the non-filterable cases.
In the non-filterable cases, GC and SC, \textit{InFi} achieves around 1.3$\times$ throughput, while in the filterable cases, it can improve the throughput up to 5.92$\times$ and achieves 1.8 and 2.25 medians for regression and subset-class cases, respectively.
These results show the guiding significance of our proven filterability in real applications.

\begin{figure}
     \centering
     \begin{subfigure}[b]{0.98\linewidth}
         \centering
         \includegraphics[width=\linewidth]{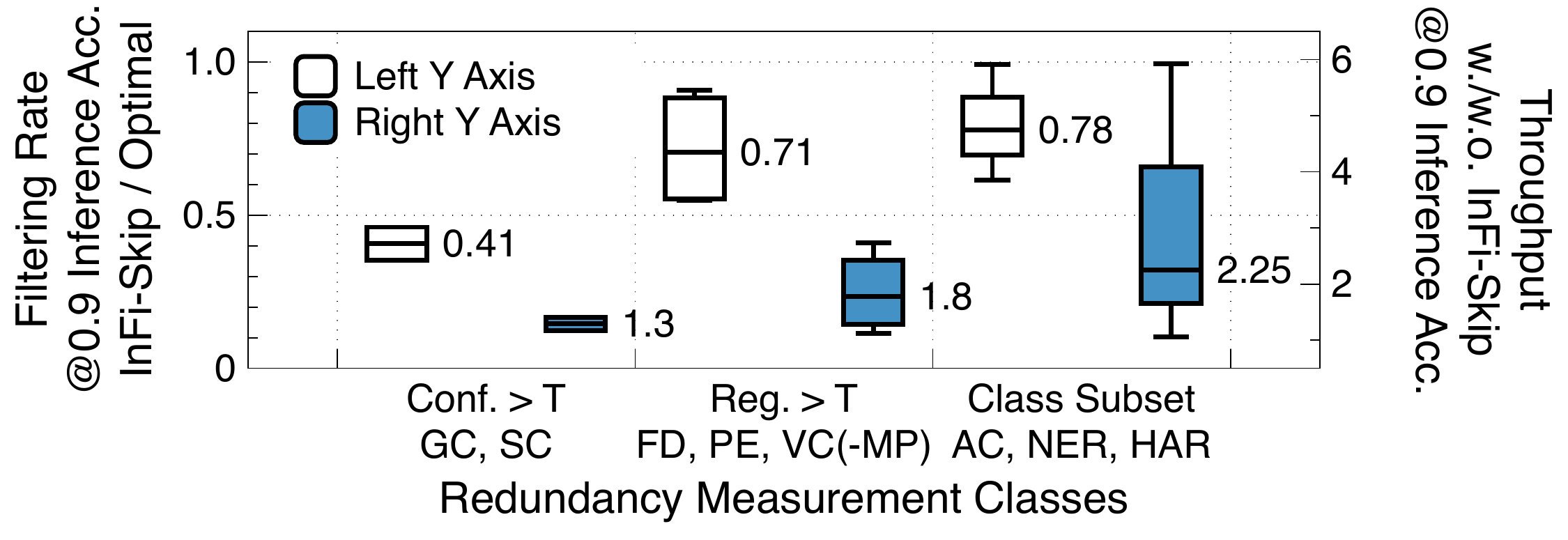}
     \end{subfigure}
\caption{Comparison of filterable and non-filterable cases.}
\label{fig:filterable}
\end{figure}

\begin{figure}[t]
     \centering
     \begin{subfigure}[b]{0.98\linewidth}
         \centering
         \includegraphics[width=\linewidth]{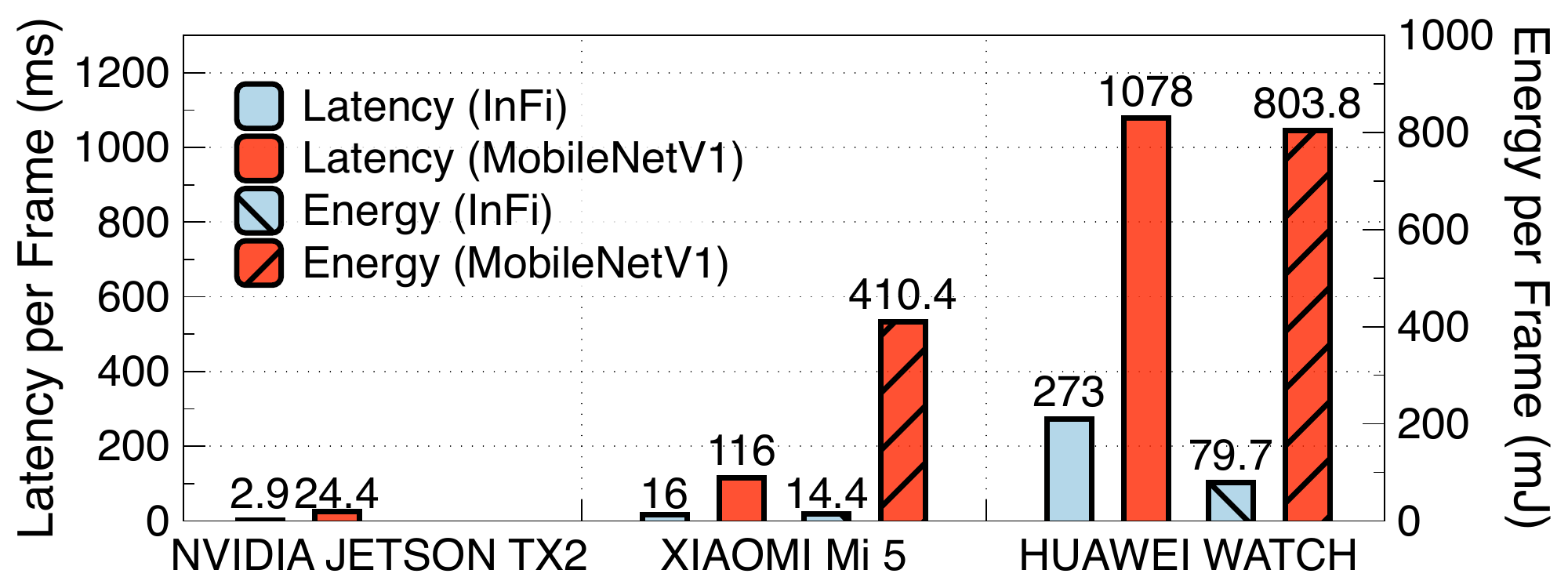}
     \end{subfigure}
\caption{Latency and energy costs of \textit{InFi} (image modality) and MobileNetV1 on mobile platforms.}
\label{fig:latency-energy}
\end{figure}

\subsection{Computation and Resource Efficiency}

As we discussed in Sec.~\ref{subsec:validity}, a ``valid'' filter should be both accurate and lightweight.
The above results have shown that \textit{InFi} can filter a significant amount of inputs while keeping accurate inference.
In the training phase, \textit{InFi} (image modality) takes around 710 ms per batch (batch size is 32) and requires 5337 MB GPU memory which most commercial GPUs can meet.
\textit{InFi} for other input modalities requires far fewer resources, e.g., \textit{InFi} (vector) tasks 3 ms per batch and needs only 435 MB memory.
We test the latency and energy in the inference phase on mobile platforms.
As a fair comparison, we chose the TFLite-optimized MobileNetV1, which is one of the most efficient CNNs on mobile devices.
As shown in Fig.~\ref{fig:latency-energy}, on three mobile platforms, \textit{InFi} with the image feature network costs only 12-25\% runtime of MobileNetV1.
The average energy costs of \textit{InFi} are 14.4/79.7 mJ per frame, which are much lower than MobileNetV1 (410.4/803.8 mJ per frame) on the phone/smartwatch.
We implement \textit{InFi} with MindSpore and the results show that \textit{InFi}'s low-energy consumption and low-latency execution do not depend on the implementation framework.

\textbf{On-device online update.}
Based on the Chaquopy library, we tested the overhead of training on a mobile phone (XIAOMI Mi 5) and a smartwatch (HUAWEI WATCH).
We randomly generated images with the shape (224, 224, 3) and set the batch size as 16.
For \textit{InFi} (image modality), experiments show that it takes around 20 s and 50 s per batch to online update weights on the phone and the watch, respectively.
And for NVIDIA JETSON TX2, training the input filter with the same configurations takes around 1s per batch.

\subsection{Different Mobile-centric Deployments}

Now we evaluate the overall performance of inference workloads in real systems with three ways of deployments.

\textbf{Vehicle counting.}
First, we consider the vehicle counting workload:
1) on-device: \textit{InFi} (image) and YOLOv3 model on TX2;
2) offload: \textit{InFi} (image) on TX2 and YOLOv3 model on edge;
3) model partitioning (MP): first 39 layers (10 convolution blocks) of YOLOv3 and \textit{InFi} (feature map) on TX2, rest of YOLOv3 on the edge server.
The average throughput of the YOLOv3 model on TX2 and edge is 3.2 FPS and 22.0 FPS, respectively.
For MP deployment, the edge-side model serves 24.5 FPS.
We report the average throughput and the bandwidth saving of using \textit{InFi}-Skip and \textit{InFi}-Reuse, with over 90\% inference accuracy, in Tab.~\ref{tab:vc-metrics}.
As a fair comparison, we test the throughput of YOLOv3-tiny~\cite{yolov3-tiny} model, a compressed version for YOLOv3.
The inference accuracy of YOLOv3-tiny is only 67.9\% which does not meet the 90\% target.
Breaking down the overheads, \textit{InFi}'s inference costs around 3 ms per frame, and the average latency of KNN is 6 ms per frame with K=10 and cache size=1000.
Achieving over 90\% inference accuracy, \textit{InFi}-Skip improves the throughput to 9.3/55.2/39.0 FPS for on-device/offload/MP deployments, respectively.
Apparently, in vehicle counting workloads, there are more filtering opportunities for \textit{InFi}-Reuse.
\textit{InFi}-Reuse improves the throughput to 27.2/77.2/46.0 FPS for these three deployments.
Except for the on-device deployment that does not involve cross-device data transmission, \textit{InFi}-Skip / \textit{InFi}-Reuse also save 66.5\% / 91.1\% and 70.7\% / 95.0\% bandwidth for offloading and MP workloads.
Unlike YOLOv3-tiny which trades a significant and fixed loss of accuracy for efficiency, \textit{InFi} provides a flexible trade-off between the inference accuracy and overheads.

\textbf{Pose estimation.}
Second, we evaluate the pose estimation workload:
1) on-device: \textit{InFi} (image) and OpenPose model on TX2;
2) offload: \textit{InFi} (image) on TX2 and OpenPose model on edge;
3) model partitioning (MP): first 39 layers (10 convolution blocks) of OpenPose and \textit{InFi} (feature map) on TX2, rest of OpenPose on the edge server.
Also, we test the throughput of OpenPose-light~\cite{light-openpose} model, a lightweight version of OpenPose.
Experimental results are shown in Tab.~\ref{tab:pe-deploy}.
Similar to the vehicle counting workload, the lightweight model cannot achieve our target 90\% inference accuracy, although its throughput boosts significantly.
\textit{InFi}-Skip can flexibly balance the inference accuracy and throughput.
For example, for the on-device deployment, the throughput improves to 1.17$\times$ after using \textit{InFi}-Skip and the inference accuracy keeps over 90\%.

\textbf{Natural language processing workloads.}
Third, we test two NLP workloads, NER and SC, with different deployments.
Note that, model partitioning is not applicable to our NER workload due to black-box APIs.
As shown in Tab.~\ref{tab:nlp-deploy}, \textit{InFi}-Skip effectively improves throughput and saves 22.5\% and 26.8\% offloading communication for the two tasks, respectively.

\begin{table}[t!]
\caption{Throughput (FPS) / Bandwidth saving (\%) of vehicle counting workloads. Acc. denotes inference accuracy, compared with vehicle count results of YOLOv3.}
\label{tab:vc-metrics}
\centering
\begin{tabular}{@{}l|ccc|c@{}}
\toprule
Workload & YOLOv3 & \textit{InFi}-Skip & \textit{InFi}-Reuse & YOLOv3-tiny \\ \midrule
Acc. (\%) & 100 & 90.3 & 90.5 & 67.9 \\ \midrule
On-device & 3.2/- & 9.3/- & 27.2/- & 20.4/- \\
Offloading & 22.0/- & 55.2/66.5 & 77.2/91.1 & 225.3/- \\
MP & 24.5/- & 39.0/70.7 & 46.0/95.0 & 230.4/- \\ \bottomrule
\end{tabular}
\end{table}

\begin{table}[t!]
\caption{Throughput (FPS) / Bandwidth saving (\%) of pose estimation workloads.}
\label{tab:pe-deploy}
\centering
\begin{tabular}{@{}l|cc|c@{}}
\toprule
Workload & OpenPose & \textit{InFi}-Skip  & OpenPose-light \\ \midrule
Inference Accuracy (\%) & 100 & 90.1 &  76.5 \\ \midrule
On-device & 15.4/- & 18.0/- &  28.1/- \\
Offloading & 27.7/- & 31.5/18.9 &  98.5/- \\
MP & 29.2/- & 33.1/20.2 & 102.4/- \\ \bottomrule
\end{tabular}
\end{table}

\begin{table}[t]
\caption{Throughput (QPS) / Bandwidth saving (\%) of two NLP workloads.}
\label{tab:nlp-deploy}
\centering
\begin{tabular}{@{}l|cc|cc@{}}
\toprule
Workload & NER & NER+\textit{InFi}-Skip & SC & SC+\textit{InFi}-Skip \\ \midrule
Acc. (\%) & 100 & 90.2 & 100 & 90.0 \\ \midrule
On-device & 24.3/- & 33.2/- & 27.9/- & 36.0/- \\
Offloading & 133.2/- & 181.9/26.8 & 60.2/- & 77.7/22.5 \\ 
MP & N/A & N/A & 62.5/- & 82.0/24.1 \\
\bottomrule
\end{tabular}
\end{table}
\section{Related Work}

\textbf{Frame filtering.}
NoScope~\cite{noscope} trains task-specific difference detectors to choose necessary frames for object queries in the video database.
FilterForward~\cite{ff} leverages MobileNet and trains a binary micro-classifier on the intermediate output of a selected layer to determine whether to transmit the input image to the server with the offloaded model.
Reducto~\cite{reducto} performs on-device frame filtering by thresholding the difference of low-level features between successive frames. 
Through elaborate selection for different tasks, low-level features can efficiently and accurately measure the difference.

\textbf{Inference caching.}
Potluck~\cite{potluck} stores and shares inference results between augmented reality applications.
It dynamically tunes the threshold of input similarity and manages cache based on the reuse opportunities.
FoggyCache~\cite{foggycache} is more general and can be applied to both image and audio inputs.
It designs adaptive LSH and homogenized KNN algorithms to address practical challenges in inference caching.
Instead of caching the final inference results, DeepCache~\cite{deepcache} stores the intermediate feature maps to achieve more granular reuse.
For object recognition, Glimpse~\cite{glimpse} maintains a cache of video frames on mobile devices.
It uses cached results to perform on-device object tracking and sends only trigger frames to the server with offloaded recognition model.

\textbf{Approaches tailored for specific pipelines.}
Focus~\cite{focus} is designed for querying detected objects in a video database and uses compressed CNN to index possible object classes at ingest stage and reduces the query latency by clustering similar objects.
Blazeit~\cite{blazeit} develops neural networks-based methods to optimize approximate aggregation queries of detected objects in video databases.
Focusing on object detection in video streams, Chameleon~\cite{chameleon} proposes to adaptively select a suitable pipeline configuration including the resolution and frame rate of videos, backbone neural networks for inference, etc.
Elf~\cite{elf} is designed for mobile video analytics where the input data is pre-processed by a lightweight on-device model and then offloaded in parallel to multiple servers with the same subsequent inference functionality.

Our proposed input-filtering framework unifies the frame filtering and inference caching approaches.
And we complement existing work in theoretical analysis and flexible supports for more input modalities and deployments.

\section{Conclusion}
In this paper, we study the input filtering problem and provide theoretical results on complexity comparisons between the hypothesis families of inference models and their input filters.
We propose the first end-to-end learnable framework that unifies both SKIP and REUSE methods and supports multiple input modalities and deployments.
We design and implement an input filter system \textit{InFi} based on our framework.
Comprehensive evaluations confirm our proven results and show that \textit{InFi} has wider applicability and outperforms strong baselines on accuracy and efficiency.


%



\ifCLASSOPTIONcompsoc
  \section*{Acknowledgments}
\else
  \section*{Acknowledgment}
\fi

This research was supported by the National Key R\&D Program of China 2021YFB2900103, China National Natural Science Foundation with No. 61932016, No. 62132018.
This work is partially sponsored by CAAI-Huawei MindSpore Open Fund and ``the Fundamental Research Funds for the Central Universities'' WK2150110024.

\ifCLASSOPTIONcaptionsoff
  \newpage
\fi



\bibliographystyle{IEEEtran}
%

\bibliography{reference.bib}




%


\begin{IEEEbiography}[{
    \includegraphics[width=1in,height=1.25in,clip,keepaspectratio]{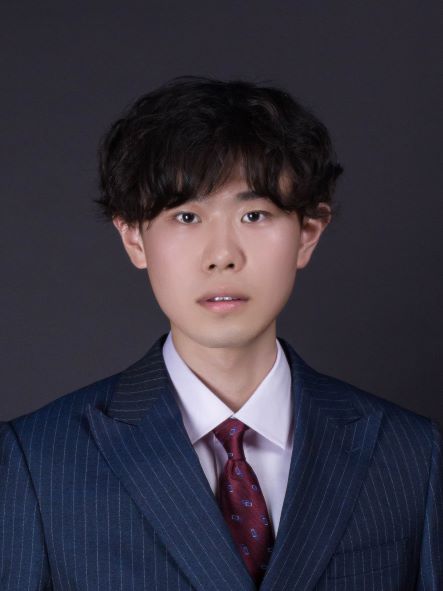}
}]{Mu Yuan}
is a Ph.D. candidate at the School of Computer Science and Technology, University of Science and Technology of China (USTC).
He received a bachelor's degree in computer science and technology from USTC.
His research interests include model inference and network systems.
\end{IEEEbiography}

\begin{IEEEbiography}[{
\includegraphics[width=1in,height=1.25in,clip,keepaspectratio]{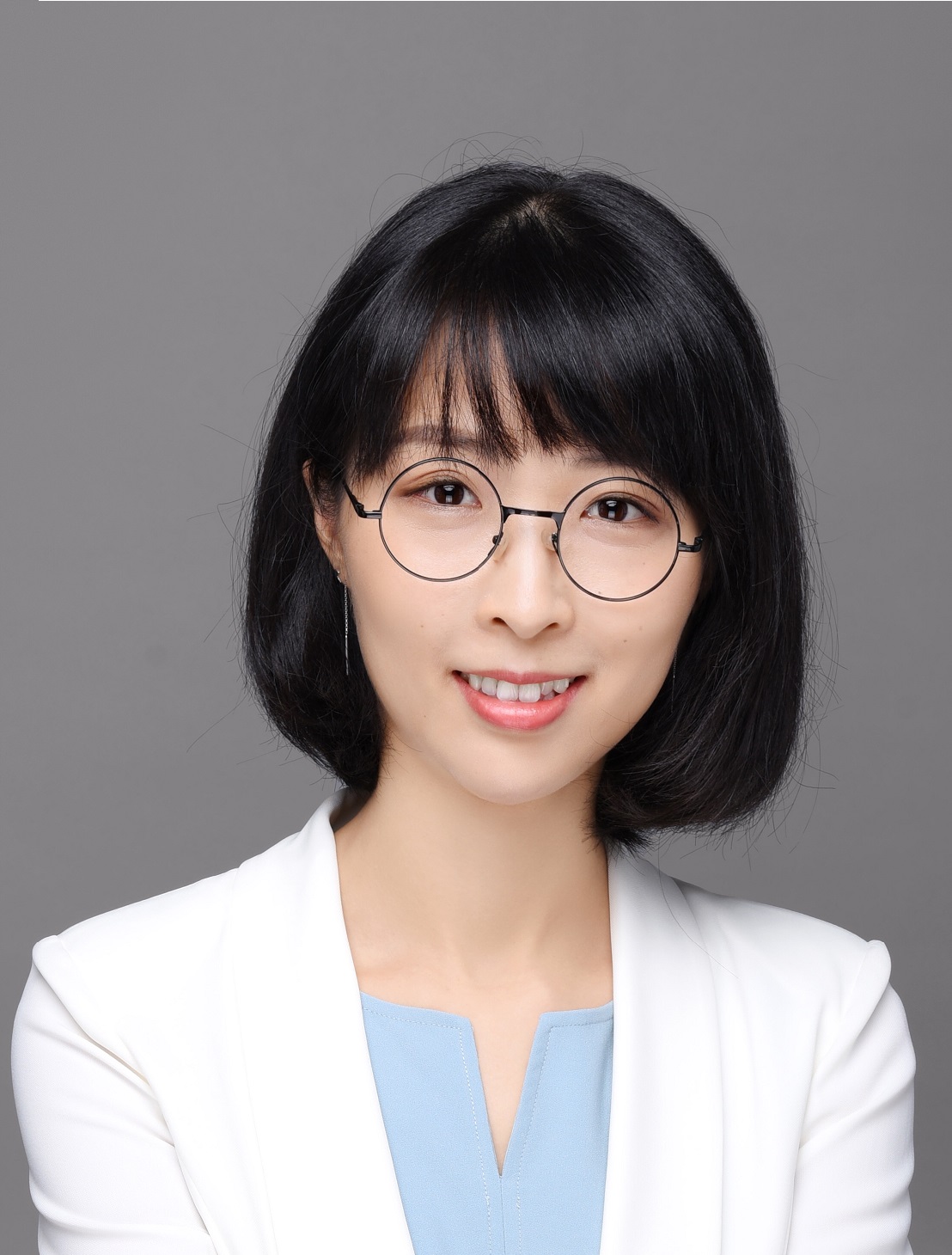}
}]{Lan Zhang}
is currently a Professor at the School of Computer Science and Technology, University of Science and Technology of China. She received her Ph.D degree and Bachelor degree from Tsinghua University, China. Her research interests include mobile computing, privacy protection,and data sharing and trading.
\end{IEEEbiography}

\begin{IEEEbiography}[{\includegraphics[width=1.0in,height=2.15in,clip,keepaspectratio]{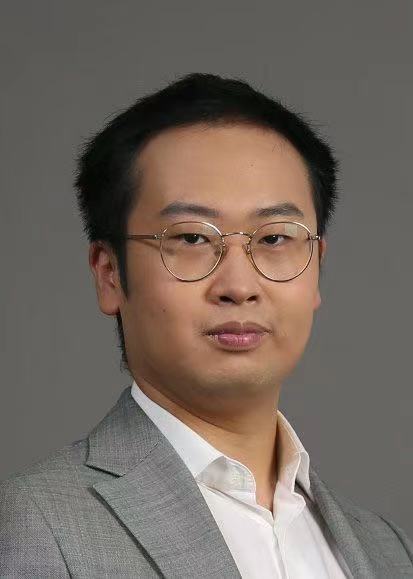}}]{Fengxiang He}
received his BSc in statistics from University of Science and Technology of China, MPhil and PhD in computer science from the University of Sydney. He is currently algorithm scientist at JD Explore Academy leading its trustworthy AI team. His research interest is in the theory and practice of trustworthy AI, including deep learning theory, privacy-preserving ML, algorithmic game theory, and decentralized learning. He publish in prominent venues, including ICML, NeurIPS, ICLR, CVPR, and ICCV. He is the area chair of prestigious conferences, AISTATS, BMVC, and ACML. He is the leading author of several standards.

\end{IEEEbiography}

\begin{IEEEbiography}[{\includegraphics[width=1.0in,height=2.15in,clip,keepaspectratio]{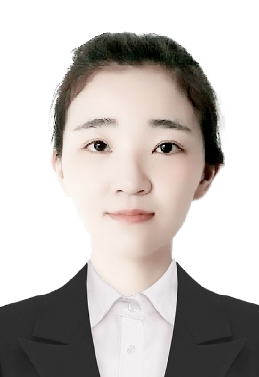}}]{Xueting Tong}
is a master's candidate at the Institute of Advanced Technology, University of Science and Technology of China. She received a bachelor's degree in Computer Science and technology from Henan University. Her research interests include model reasoning and optimization.
\end{IEEEbiography}

\begin{IEEEbiography}[{
\includegraphics[width=1in,height=1.25in,clip,keepaspectratio]{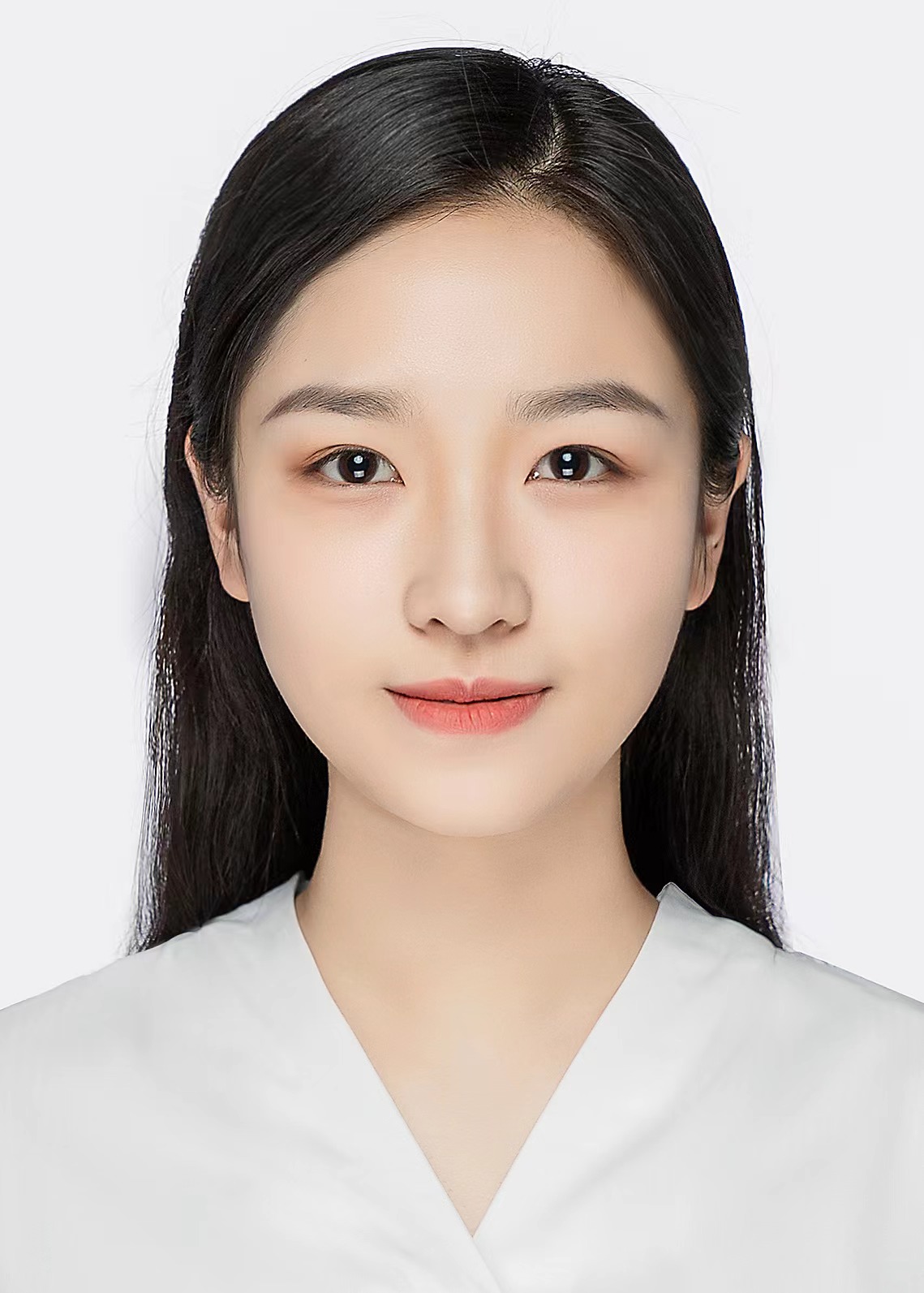}
}]{Miao-Hui Song}
is a master student at the School of Computer Science and Technology, University of Science and Technology of China. 
She received a bachelor's degree in computer science and technology from Chongqing University. 
Her research interests include active learning and data-labeling systems.
\end{IEEEbiography}

\begin{IEEEbiography}[{
\includegraphics[width=1in,height=1.25in,clip,keepaspectratio]{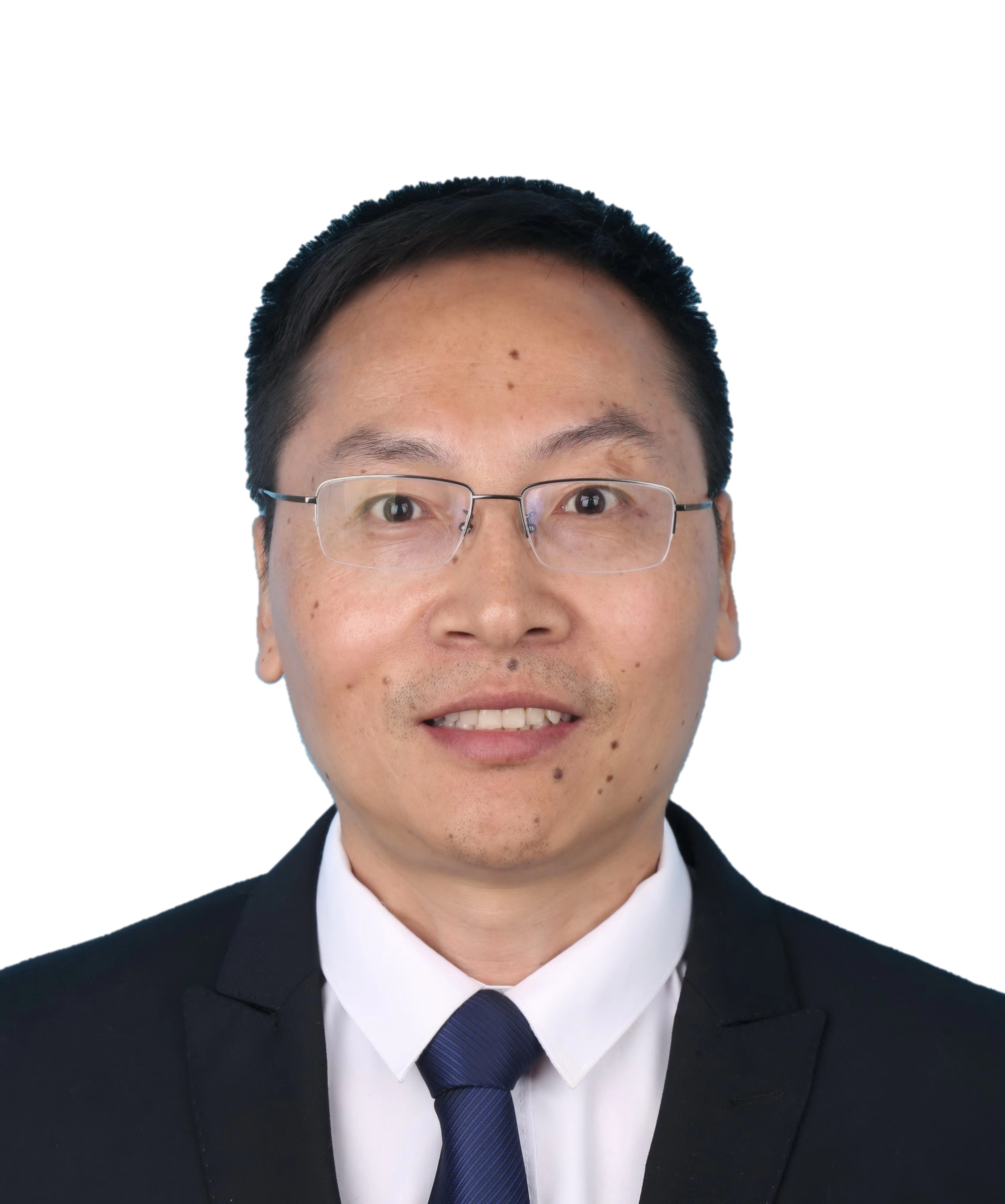}
}]{Zhengyuan Xu} received his B.S. and M.S. degrees from Tsinghua University, China, and Ph.D. degree from Stevens Institute of Technology, USA. He was a tenured full professor at University of California at Riverside and later at Tsinghua University before he joined University of Science and Technology of China (USTC). He was Founding Director of the multi-campus Center for Ubiquitous Communication by Light (UC-Light), University of California, and Founding Director of Wireless-Optical Communications Key Laboratory of Chinese Academy of Sciences. He was a distinguished expert and chief scientist of the National Key Basic Research Program of China. His research focuses on Petahertz communications, optical wireless communications, mobile networking, artificial intelligence, wireless big data, sensing, ranging and localization. He has published over 400 international journal and conference papers, and co-authored a book titled Visible Light Communications: Modulation and Signal Processing which has been selected by IEEE Series on Digital \& Mobile Communication and published by Wiley-IEEE Press. He has been on the Elsevier annual list of Most Cited Chinese Researchers since 2014. He has served as an Associate Editor for different IEEE/OSA journals and was a Founding Co-Chair of IEEE Workshop on Optical Wireless Communications in 2010.
\end{IEEEbiography}

\begin{IEEEbiography}[{\includegraphics[width=1.0in,height=2.15in,clip,keepaspectratio]{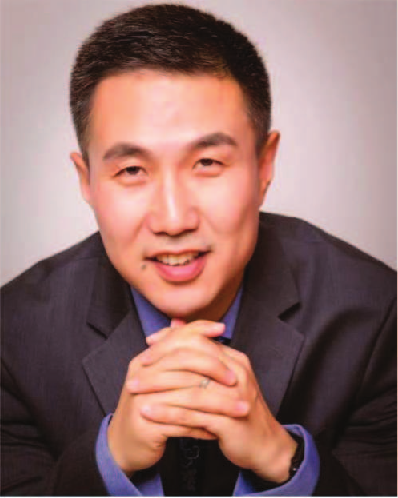}}]{Xiang-Yang Li}
 (Fellow, IEEE) is a professor and Executive Dean at School of Computer Science and Technology, USTC. He is an ACM Fellow (2019), IEEE fellow (2015), an ACM Distinguished Scientist (2014). He was a full professor at Computer Science Department of IIT and co-Chair of ACM China Council. Dr. Li received M.S. (2000) and Ph.D. (2001) degree at Department of Computer Science from University of Illinois at Urbana-Champaign. He received a Bachelor degree at Department of Computer Science from Tsinghua University, P.R. China, in 1995. His research interests include Artificial Intelligence of Things (AIOT), privacy and security of AIOT, and data sharing and trading.
\end{IEEEbiography}




\end{document}